\newtheorem{theorem}{Theorem}
\newtheorem{lemma}[theorem]{Lemma}
\newcommand{\oea}{\mbox{$(1 + 1)$~EA}\xspace}
\newcommand{\OM}{\textsc{OneMax}\xspace}
\newcommand{\onemax}{\OM}
\newcommand{\LO}{\textsc{Leading\-Ones}\xspace}
\newcommand{\leadingones}{\LO}
\newcommand{\binval}{\textsc{BinVal}\xspace}
\DeclareMathOperator{\jump}{\textsc{Jump}}
\DeclareMathOperator{\Sample}{Sample}
\DeclareMathOperator{\minmax}{minmax}
\DeclareMathOperator{\paral}{par}
\DeclareMathOperator{\poly}{poly}
\newcommand{\R}{\ensuremath{\mathbb{R}}}
\newcommand{\N}{\ensuremath{\mathbb{N}}} 
\newcommand{\calA}{\ensuremath{\mathcal{A}}} 
\newcommand{\calF}{\ensuremath{\mathcal{F}}} 
\newcommand{\calP}{\ensuremath{\mathcal{P}}}
\DeclareMathOperator{\Bin}{Bin}
\newcommand{\eps}{\varepsilon}
\newcommand{\assign}{\leftarrow}
\begin{document}

\title{A Tight Runtime Analysis for the cGA on Jump Functions---EDAs Can Cross Fitness Valleys at No Extra Cost\thanks{Full version of a paper~\cite{Doerr19} appearing at GECCO 2019.}}

\author{Benjamin Doerr\\ \'Ecole Polytechnique\\ CNRS\\ Laboratoire d'Informatique (LIX)\\ Palaiseau\\ France}

\maketitle

\sloppy{
\begin{abstract}
  We prove that the compact genetic algorithm (cGA) with hypothetical population size $\mu = \Omega(\sqrt n \log n) \cap \poly(n)$ with high probability finds the optimum of any $n$-dimensional jump function with jump size $k < \frac 1 {20} \ln n$ in $O(\mu \sqrt n)$ iterations. Since it is known that the cGA with high probability needs at least $\Omega(\mu \sqrt n + n \log n)$ iterations to optimize the unimodal $\onemax$ function, our result shows that the cGA in contrast to most classic evolutionary algorithms here is able to cross moderate-sized valleys of low fitness at no extra cost. 
  
  Our runtime guarantee improves over the recent upper bound $O(\mu n^{1.5} \log n)$ valid for $\mu = \Omega(n^{3.5+\eps})$ of Hasen\"ohrl and Sutton (GECCO 2018). For the best choice of the hypothetical population size, this result gives a runtime guarantee of $O(n^{5+\eps})$, whereas ours gives $O(n \log n)$.  
  
  We also provide a simple general method based on parallel runs that, under mild conditions, (i)~overcomes the need to specify a suitable population size, but gives a performance close to the one stemming from the best-possible population size, and (ii)~transforms EDAs with high-probability performance guarantees into EDAs with similar bounds on the expected runtime.
  
\end{abstract}

\section{Introduction}

While the mathematical analysis of evolutionary algorithms (EAs) has produced a plethora of insightful results in the last over 20 years, the rigorous understanding of estimation-of-distribution algorithms (EDAs) is much less developed~\cite{KrejcaW18}. Obviously, this is due to the highly complex stochastic processes that describe the runs of such algorithms. In consequence, despite significant efforts and deep results~\cite{Droste06,SudholtW16,LenglerSW18}, not even the runtime of the compact genetic algorithm (cGA) on the \onemax benchmark function is fully understood (here we would argue that the cGA is the most simple EDA and that the unimodal \onemax function, counting the number of ones in a bit string, is the most simple optimization problem with unique global optimum). It is therefore not surprising that many questions which are well-understood for EAs are only started being understood for EDAs. One such question is how EDAs optimize objective functions that are not unimodal.

In the first and so far only runtime analysis of an EDA on a non-unimodal objective function, Hasen\"ohrl and Sutton~\cite{HasenohrlS18} regard the optimization time of the cGA on the jump function class, which are unimodal apart from having a valley of low fitness of scalable size $k$ around the global optimum. They show~\cite[Theorem~3.3]{HasenohrlS18} that, for a sufficiently large constant $C$ and any constant $\eps > 0$, the cGA with hypothetical population size at least $\mu \ge \max\{C n e^{4k}, n^{3.5+\eps}\}$\footnote{In the paper, this is stated as minimum of the two terms, but from the proofs it is clear that it should be the maximum.} with probability $1 - o(1)$ finds the optimum of any jump function with jump size at most $k = o(n)$ in $O(\mu n^{1.5} \log n + e^{4k})$ generations (which is also the number of fitness evaluations, since the cGA evaluates only two search points in each iteration).

This result is remarkable in that it shows that the cGA with the right choice of $\mu$ and for $k \ge 6$ is more efficient on jump functions than most evolutionary algorithms, who have a runtime of at least $\Omega(n^k)$, see Section~\ref{sec:jump}. 

There is one aspect in which the result of Hasen\"ohrl and Sutton is not yet perfect (and this is the motivation of this work). We note that even when choosing the smallest possible population size $\mu = n^{3.5+\eps}$, the runtime guarantee becomes at least $\Omega(n^{5+\eps})$. While clearly a polynomial runtime, and thus \emph{efficient} in the classic complexity theory view, this is a runtime that is not practical in many applications (and we recall here that the target of the mathematical analysis of evolutionary algorithms is not to understand jump functions, but to derive from the analysis on simple test problems insight that extend to practically relevant problems). Also, this runtime guarantee is weaker than the $O(n^k)$ bound for simple mutation-based EAs such as the \oea when $k \le 5$. Hence one could feel that the result of Hasen\"ohrl and Sutton shows the superiority of EDAs rather for problem instances for which both the runtime of typical EAs and the performance guarantee for the cGA are prohibitively large. In a similar vein, one has to question if a practitioner would run the cGA with a hypothetical population size of more than $n^{3.5}$ when solving a problem defined over bit strings of length $n$.

\textbf{Our main result} is that these potential weaknesses of the cGA are not real and that the cGA performs in fact much better than what the previous work shows. We prove rigorously that the cGA with hypothetical population size $\mu \ge K \sqrt n \log n$, $K$ a sufficiently large constant, and $\mu$ polynomially bounded in $n$ with high probability optimizes any $n$-dimensional jump function with jump size $k < \frac 1 {20} \ln n$ in only $O(\mu \sqrt n)$ iterations. For the smallest admissible populations size $\mu = \Theta(\sqrt n \log n)$, this gives a runtime guarantee of $O(n \log n)$, a result that both overcomes the large runtime and the large required hypothetical population size of the previous result.

From a broader perspective our result shows that the cGA (and we expect similar result to hold for other EDAs) does not suffer from moderate-size valleys of low fitness. We recall that Sudholt and Witt~\cite{SudholtW16} have shown that the cGA with any hypothetical population size (polynomial in $n$) with high probability needs $\Omega(\mu\sqrt n + n \log n)$ iterations to optimize the \onemax function. Hence our result shows that adding a valley of low fitness to the \onemax function does not worsen the asymptotic performance of the cGA as long as the fitness valley is smaller than $\frac 1 {20} \ln n$.

We may add that our work also makes some arguments of~\cite{HasenohrlS18} more rigorous. In particular, we observe that the progress of the cGA cannot be estimated by taking the progress one would have when no fitness valley was present and correcting this estimate by inverting the progress with the probability that a search point is sampled in the fitness valley. This argument ignores the stochastic dependencies between the absolute value of the progress and the event that a solution in the gap is sampled. These dependencies are real and have a negative impact as discussed in more detail before Lemma~\ref{ldrift}.

We note that the approach of intentionally ignoring some dependencies to make a mathematical analysis tractable, often called mean-field analysis, is common in some scientific areas, most notably statistical physics, and has also been used in evolutionary computation, e.g.,~\cite{ZhengYD18}. This approach, however, needs an additional justification, e.g., via specific experiments, why the omission of the dependencies should not change the matter substantially. In any case, such mean-field approaches do not lead to results fully proven in the mathematical sense. In this sense, we hope that our work also provides methods that help in future analyses of EDAs on non-unimodal optimization problems.

As a \textbf{side result}, triggered by the fact that we ``only'' show a bound that holds with high probability, but not a bound on the expected runtime, we provide a general approach to transform an EDA using a population size parameter $\mu$ into an algorithm that does not require the specification of such a parameter, but has a performance similar to the one of the EDA with optimally chosen parameter. This performance guarantee also holds for the expected runtime, even if for the EDA only a with-high-probability runtime guarantee is known.

\section{Preliminaries}

\subsection{The Compact Genetic Algorithm}

The \emph{compact genetic algorithm} (cGA) is an estimation-of-distribution algorithm (EDA) proposed by Harik, Lobo, and Goldberg~\cite{HarikLG99} for the maximization of pseudo-Boolean functions $\calF : \{0,1\}^n \to \R$. Being a univariate EDA, it develops a probabilistic model described by a frequency vector $f \in [0,1]^n$. This frequency vector describes a probability distribution on the search space $\{0,1\}^n$. If $X = (X_1, \dots, X_n) \in \{0,1\}^n$ is a search point sampled according to this distribution---we write \[X \sim \Sample(f)\] to indicate this---then we have $\Pr[X_i = 1] = f_i$ independently for all $i \in [1..n] \coloneqq \{1, \dots, n\}$. In other words, the probability that $X$ equals some fixed search point $y$ is 
\[\Pr[X = y] = \prod_{i : y_i = 1} f_i \prod_{i : y_i = 0} (1 - f_i).\]

In each iteration, the cGA updates this probabilistic model as follows. It samples two search points $x^1, x^2 \sim \Sample(f)$, computes the fitness of both, and defines $(y^1,y^2) = (x^1,x^2)$ when $x^1$ is at least as fit as $x^2$ and $(y^1,y^2) = (x^2,x^1)$ otherwise. Consequently, $y^1$ is the rather better search point of the two. We then define a preliminary model by $f' \coloneqq f + \frac 1 \mu (y^1 - y^2)$. This definition ensures that, when $y^1$ and $y^2$ differ in some bit position $i$, the $i$-th preliminary frequency moves by a step of $\frac 1 \mu$ into the direction of $y^1_i$, which we hope to be the right direction since $y^1$ is the better of the two search points. The \emph{hypothetical populations size}~$\mu$ is used to control how strong this update is. 

To avoid a premature convergence, we ensure that the new frequency vector is in $[\frac 1n, 1 - \frac 1n]^n$ by capping too small or too large values at the corresponding boundaries. More precisely, for all $\ell \le u$ and all $r \in \R$ we define 
\[
\minmax(\ell,r,u) \coloneqq \max\{\ell,\min\{r,u\}\} = \begin{cases} 
\ell & \mbox{if $r < \ell$}\\
r & \mbox{if $r \in [\ell,u]$}\\
u & \mbox{if $r > u$}
\end{cases}
\] 
and we lift this notation to vectors by reading it component-wise. Now the new frequency vector is $\minmax(\frac 1n \mathbf{1}_n, f', (1 - \frac 1n) \mathbf{1}_n)$.

This iterative frequency development is pursued until some termination criterion is met. Since we aim at analyzing the time (number of iterations) it takes to sample the optimal solution (this is what we call the \emph{runtime} of the cGA), we do not specify a termination criterion and pretend that the algorithm runs forever.

The pseudo-code for the cGA is given in Algorithm~\ref{alg:cga}. We shall use the notation given there frequently in our proofs. For the frequency vector $f_t$ obtained at the end of iteration $t$, we denote its $i$-th component by $f_{i,t}$ or, when there is no risk of ambiguity, by $f_{it}$.
	
\begin{algorithm2e}%
	$t \assign 0$\;
	$f_t = (\frac 12, \dots, \frac 12) \in [0,1]^n$\;
	\Repeat{forever}{
    $x^1 \assign \Sample(f_t)$\;
    $x^2 \assign \Sample(f_t)$\;
    \leIf{$\calF(x^1) \ge \calF(x^2)$}{$(y^1,y^2) \assign (x^1,x^2)$}{$(y^1,y^2) \assign (x^2,x^1)$}
    $f'_{t+1} \assign f_t + \frac 1 \mu (y^1-y^2)$\;
    $f_{t+1} \assign \minmax(\frac 1n \mathbf{1}_n, f'_{t+1}, (1 - \frac 1n) \mathbf{1}_n)$\;
    $t \assign t+1$\; 
  }
\caption{The compact genetic algorithm (cGA) to maximize a function $\calF : \{0,1\}^n \to \R$.}
\label{alg:cga}
\end{algorithm2e}

\textbf{Well-behaved frequency assumption:} 
For the hypothetical population size $\mu$, we take the common assumption that any two frequencies that can occur in a run of the cGA differ by a multiple of $\frac 1 \mu$. We call this the \emph{well-behaved frequency assumption}. This assumption was implicitly already made in~\cite{HarikLG99} by using even $\mu$ in all experiments (note that the hypothetical population size is denoted by $n$ in~\cite{HarikLG99}). This assumption was made explicit in~\cite{Droste06} by requiring $\mu$ to be even. Both works do not use the frequencies boundaries $\frac 1n$ and $1 - \frac 1n$, so an even value for $\mu$ ensures well-behaved frequencies. 

For the case with frequency boundaries, the well-behaved frequency assumption is equivalent to $(1-\frac 2n)$ being an even multiple of the update step size $\frac 1 \mu$. In this case, $n_\mu = (1 - \frac 2n) \mu \in 2 \N$ and the set of frequencies that can occur is \[F \coloneqq F_\mu \coloneqq \{\tfrac 1n + \tfrac i \mu \mid i \in [0..n_\mu]\}.\] 
This assumption was made, e.g., in the proof of Theorem~2 in~\cite{SudholtW16} and in the paper~\cite{LenglerSW18} (see the paragraph following Lemma~2.1).

\subsection{Related Work}\label{sec:jump}

In all results described in this section, we shall assume that the hypothetical population size is at most polynomial in the problem size $n$, that is, that there is a constant $c$ such that $\mu \le n^c$. 

The first to conduct a rigorous runtime analysis for the cGA was Droste in his seminal work~\cite{Droste06}. He regarded the cGA without frequency boundaries, that is, he just took $f_{t+1} \coloneqq f'_{t+1}$ in our notation. He showed that this algorithm with $\mu \ge n^{1/2 + \eps}$, $\eps > 0$ any positive constant, finds the optimum of the $\onemax$ function defined by 
\[\onemax(x) = \|x\|_1 = \sum_{i=1}^n x_i\] 
for all $x \in \{0,1\}^n$ with probability at least $1/2$ in $O(\mu \sqrt n)$ iterations~\cite[Theorem~8]{Droste06}. 

Droste also showed that this cGA for any objective function $\calF$ with unique optimum has an expected runtime of $\Omega(\mu \sqrt n)$ when conditioning on no premature convergence~\cite[Theorem~6]{Droste06}. It is easy to see that his proof of the lower bound can be extended to the cGA with frequency boundaries, that is, to Algorithm~\ref{alg:cga}. For this, it suffices to deduce from his drift argument the result that the first time $T_{n/4}$ that the frequency distance $D = \sum_{i=1}^n (1 - f_{it})$ is less than $n/4$ satisfies $E[T_{n/4}] \ge \mu \sqrt n \frac{\sqrt 2}{4}$. Since the probability to sample the optimum from a frequency distance of at least $n/4$ is at most 
\begin{align*}
\prod_{i=1}^n f_{it} &= \prod_{i=1}^n (1 - (1 - f_{it})) \le \prod_{i=1}^n \exp(-(1 - f_{it})) \\
&= \exp\left(-\sum_{i=1}^n (1-f_{it})\right) \le \exp(-n/4),
\end{align*} 
the algorithm with high probability does not find the optimum before time~$T_{n/4}$.

Ten years after Droste's work, Sudholt and Witt~\cite{SudholtW16} showed that the $O(\mu \sqrt n)$ upper bound also holds for the cGA with frequency boundaries. There (but the same should be true for the cGA without boundaries) a hypothetical population size of $\mu =\Omega(\sqrt n \log n)$ suffices (recall that Droste required $\mu = \Omega(n^{1/2+\eps})$). The technically biggest progress with respect to upper bounds most likely lies in the fact that the analysis in~\cite{SudholtW16} also holds for the expected optimization time, which means that it also includes the rare case that frequencies reach the lower boundary (see our discussion of the relation of expectations and tail bounds for runtimes of EDAs in Section~\ref{sec:whp}). Sudholt and Witt also show that the cGA with frequency boundaries with high probability (and thus also in expectation) needs at least $\Omega(\mu\sqrt n + n \log n)$ iterations to optimize $\onemax$. While the $\mu\sqrt n$ lower bound could have been also obtained with methods similar to Droste's (in Lemma~\ref{lonemax} we do something very similar), the innocent-looking $\Omega(n \log n)$ bound is surprisingly difficult to prove.

Not much is known for hypothetical population sizes below the order of $\sqrt n$. It is clear that then the frequencies will reach the lower boundary of the frequency range, so working with a non-trivial lower boundary like~$\frac 1n$ is necessary to prevent premature convergence. The recent lower bound $\Omega(\mu^{1/3} n)$ valid for $\mu = O(\frac{\sqrt n}{\log n \log\log n})$ of~\cite{LenglerSW18} indicates that already a little below the $\sqrt n$ regime significantly larger runtimes occur, but with no upper bounds this regime remains largely not understood.

We refer the reader to the recent survey~\cite{KrejcaW18} for more results on the runtime of the cGA on classic unimodal test functions like \leadingones and \binval. Interestingly, nothing was known for non-unimodal functions before the recent work of Hasen\"ohrl and Sutton~\cite{HasenohrlS18} on jump functions, which we discussed already in the introduction. 

To round off the picture, we briefly describe some typical runtimes of evolutionary algorithms on jump functions. We recall that the $n$-dimensional jump function with jump size $k \ge 1$ is defined by
\[
\jump_{nk}(x) = 
\begin{cases}
\|x\|_1+k & \mbox{if $\|x\|_1 \in [0..n-k] \cup \{n\}$,}\\
n - \|x\|_1 & \mbox{if $\|x\|_1 \in [n-k+1\, ..\, n-1]$}.
\end{cases}
\]
Hence for $k = 1$, we have a fitness landscape isomorphic to the one of $\onemax$, but for larger values of $k$ there is a fitness valley consisting of the $k-1$ highest sub-optimal fitness levels of the \onemax function. This valley is hard to cross for evolutionary algorithms using standard-bit mutation with mutation rate $\frac 1n$ since with very high probability they need to generate the optimum from one of the local optima, which in a single application of the mutation operator happens only with probability less than $n^{-k}$. For this reason, e.g., the classic $(\mu+\lambda)$ and $(\mu,\lambda)$ EAs all have a runtime of at least $n^k$. This was proven formally for the \oea in the classic paper~\cite{DrosteJW02}, but the argument just given proves the $n^k$ lower bound equally well for all $(\mu+\lambda)$ and $(\mu,\lambda)$ EAs. By using larger mutation rates or a heavy-tailed mutation operator, a $k^{\Theta(k)}$ runtime improvement can be obtained~\cite{DoerrLMN17}, but the runtime remains $\Omega(n^k)$ for $k$ constant. 

Asymptotically better runtimes can be achieved when using crossover, though this is harder than expected. The first work in this direction~\cite{JansenW02}, among other results, could show that a simple $(\mu+1)$ genetic algorithm using uniform crossover with rate $p_c = O(1 / kn)$ obtains an $O(\mu n^2 k^3 + 2^{2k} p_c^{-1})$ runtime when the population size is at least $\mu = \Omega(k \log n)$. A shortcoming of this result, already noted by the authors, is that it only applies to uncommonly small crossover rates. Using a different algorithm that first always applies crossover and then mutation, a runtime of $O(n^{k-1} \log n)$ was achieved by Dang et al.~\cite[Theorem~2]{DangFKKLOSS18}. For $k \ge 3$, the logarithmic factor in the runtime can be removed by using a higher mutation rate. With additional diversity mechanisms, the runtime can be further reduced up to $O(n \log n + 4^k)$, see~\cite{DangFKKLOSS16}. In the light of this last result, the insight stemming from the previous work~\cite{HasenohrlS18} and ours is that the cGA apparently without further modifications supplies the necessary diversity to obtain a runtime of $O(n \log n + 2^{O(k)})$.

Finally, we note that runtimes of $O(n \binom{n}{k})$ and $O(k \log(n) \binom{n}{k})$ were shown for the $(1+1)$~IA$^{\mathrm hyp}$ and the $(1+1)$ Fast-IA artificial immune systems, respectively~\cite{CorusOY17,CorusOY18fast}.

\subsection{Expected Runtimes versus Guarantees with High Probability}\label{sec:whp}

We note that our main result as well as the previous one~\cite{HasenohrlS18} for this problem give runtime bounds that hold with high probability, that is, with probability $1 - o(1)$. However, we do not show a bound on the expected runtime. Let us quickly argue what the differences are, why we chose to prove a high-probability statement, and how to transform EDAs with high-probability guarantees into those with guarantees on the expected runtime. We note that Wegener~\cite[Section~3]{Wegener05} with different arguments also suggests to prefer high-probability guarantees over expected runtimes.

For most evolutionary algorithms a high-probability guarantee can easily be turned into a bound on the expected runtime. If we know that a certain algorithm from any initial state finds the optimum in time $T$ with at least constant probability, then by splitting time into consecutive segments of length $T$ we see that after time $\gamma T$ the probability that the algorithm has not succeeded is at most $\exp(-\Omega(\gamma))$. Consequently, the runtime is stochastically dominated by $T$ times a geometric random variable with constant success rate, and consequently, the expected runtime is $O(T)$. The same argument gives a scalable tail bound of type ``with probability at most $\exp(-\Omega(\gamma))$, the runtime is more than $\gamma T$.''

For EDAs, it is usually much harder to show a good performance for any initial situation since there are some states which are particularly unfavorable (usually when all frequencies are close to the wrong boundary value). This does not rule out that the expected runtime and the time that is obtained with high probability are of the same order, but proving the bound on the expected runtime needs stronger arguments. The analysis of the expected runtime of the cGA on \onemax in~\cite{SudholtW16} is an example for such a result. 

This additional proof complexity raises the question if this effort is justified if the hardest part is dealing with states of the algorithm that are rarely reached (in~\cite{SudholtW16} with probability $O(n^{-c})$ only, where $c$ can be any positive constant). While we think that is was very valuable that the work~\cite{SudholtW16} showed how to compute expected runtimes for EDAs, we feel that such results are not always needed, both because of the difficulty to obtain such results and because, in some sense, they are a mildly unnatural remedy to the deeper problem.

As said, the main reason why guarantees for the expected runtime of an EDA can be difficult to show is that the EDA with small probability can reach a state from which the optimum is hard to reach. When in such a state, however, instead of spending much time to leave the unfavorable state, it would be more efficient and more natural to simply restart the algorithm and have a new good chance for a fast optimization process. While we cannot expect the algorithm to detect that it is in an unfavorable state (except in the case of premature convergence when no frequency boundaries are used), the following simple parallel-run strategy under mild assumptions can do this automatically. More precisely, via suitable parallel runs we obtain an expected runtime that is only a logarithmic factor above the runtime the EDA would have with high probability when using the optimal population size. Hence this approach both obtains expected runtimes and optimizes the value of the parameter $\mu$. \\

\noindent\textbf{Parallel EDA runs with exponentially growing population size:} Let $\calA$ be an EDA with a parameter $\mu$ and let $\calP$ be a problem we want to solve. We assume that there are unknown values $\tilde\mu$ and $T$ such that $\calA$ with any parameter value $\mu \ge \tilde\mu$ solves $\calP$ in time $\mu T$ with probability at least $\frac 34$.

We propose the following strategy to solve $\calP$ via parallel runs of $\calA$ with different parameter values. We start with no process running. In round $i = 1, 2, \dots$ of our strategy, we let all running processes (which are process $1$ to $i-1$) use a computational budget of $2^{i-1}$; further, we start process $i$ with parameter $\mu = \mu_i \coloneqq 2^{i-1}$ and let it use a budget of $\sum_{j=0}^{i-1} 2^j$. We stop when any process has solved the problem.

We observe that at the end of round $i$, processes $1$ to $i$ are running and have each spent a budget of $\sum_{j=0}^{i-1} 2^j$. Consequently, the total budget spent in the first $i$ rounds is less than $i 2^i$. 

Note that after round $i_0 = 1 + \lceil \log_2 \tilde\mu \rceil + \lfloor \log_2 T \rfloor$, the process with parameter $\mu = 2^{\lceil \log_2 \tilde\mu \rceil} \ge \tilde\mu$ has started and has used a time budget of 
\[\sum_{j=0}^{i_0 - 1} 2^j \ge \sum_{j=\lceil \log_2 \tilde\mu \rceil}^{i_0 - 1} 2^j = \mu \sum_{j=0}^{\lfloor \log_2 T \rfloor} 2^j \ge \mu T.\] 
Consequently, with probability $3/4$ this process has found the optimum at that time. With the same type of computation, we see that in round $i_0 + j$, the process with parameter $2^j \mu$ is finished with probability $3/4$. Consequently, the round in which we found the solution is dominated by $i_0 - 1$ plus a geometric distribution with success rate $3/4$. The expected time taken by this strategy to solve the problem thus is at most 
\[\sum_{i = i_0}^\infty \left(\frac 14\right)^{i - i_0} \left(\frac 34\right) i \, 2^i = \frac 34 \, 2^{i_0} \sum_{j=0}^\infty 2^{-j} (j+i_0) = 3 \cdot 2^{i_0-1} (i_0+1)\]
using the well-known result $\sum_{j=0}^\infty j \, 2^{-j} = 2$. We further estimate the expected runtime of our parallel-run strategy by
\[
3 \cdot 2^{i_0-1} (i_0+1) \le 3 \mu T (\log_2 (\mu T) + 2) \le 6 \tilde\mu T (\log_2 (\tilde\mu T) + 3) \eqqcolon T_{\paral}.
\]
We note that, again, analogous arguments give the scalable tail bound that with probability at most $\exp(-\Omega(\gamma))$, the runtime exceeds $\gamma T_{\paral}$. We recall here that for EDAs such tail bounds are usually not shown, again due to the fact that the EDA may reach a situation from which is takes a long time to reach the optimum.

We note that if the values of $\tilde \mu$ and $T$ were known in advance, then restarting the EDA with $\mu = \tilde \mu$ and with a budget of $T$ until the problem is solved would immediately give an algorithm with expected runtime $T^* \le \frac 43 \tilde \mu T$. This is the best-possible expected runtime that can be deduced from our assumptions. Consequently, our parallel-run strategy with its $O(T^* \log T^*)$ expected runtime obtains the optimal expected runtime apart from a logarithmic factor. 

We remark that a logarithmic factor usually is not a lot compared to what can be lost by choosing a wrong algorithm parameter, in particular, when the parameter is hard to guess. We note here that the recent work~\cite{LenglerSW18} suggests that already for the simple \onemax function, the hypothetical population size has a non-obvious influence on the runtime: Sufficiently small values give an $O(n \log n)$ runtime, in a middle regime the runtime increases to $\tilde\Omega(n^{7/6})$ before dropping again to $O(n \log n)$ and then increasing linearly with $\mu$. In the light of such results, a logarithmic overhead for exploiting a near-optimal rate appears to be a good trade-off.

\section{Main Technical Analysis}

We now conduct our runtime analysis of the cGA on jump functions. We start by giving a rough overview of the proof, then provide the necessary ingredients of the main proof, and finally state and prove our main result.

\subsection{Proof Overview}

We now give a brief overview of our runtime analysis and show how the different partial results work together. We leave it to the reader to read this section now or after the presentation of the partial results (or twice).

In our analysis, we roughly distinguish three phases of the optimization process. The first phase lasts until for the first time the frequency distance $D_t \coloneqq n - \|f_t\|_1$ is $O(\log n)$ with a large implicit constant. During this phase, by Lemma~\ref{lsample} and a union bound, with high probability we will never sample a solution in the gap. Consequently, we can pretend that we are optimizing the \onemax function and use our analysis of Lemma~\ref{lonemax}, which reuses arguments of the classic result by Droste~\cite{Droste06} including Lemma~\ref{ldroste}. The second phase then lasts until we have a $D_t$ value of $O(k)$, again with large implicit constant. In this phase, we use the drift computed in Lemma~\ref{ldrift}. We profit from the fact that in this phase we only need to obtain a moderate decrease of $D_t$ and apply the additive drift theorem with the smallest drift that can occur in this phase, which is $\Omega(1/\mu)$. Since this phase is so short, a simple Markov bound suffices to show that the phase ends with high probability in due time. Once we reach a $D_t$ value of $O(k)$, we have a reasonable chance to sample the optimum by Lemma~\ref{lopt}. Since in this phase samples in the gap occur frequently, we have less control over $D_t$, in particular, we cannot exhibit an expected decrease of~$D_t$. We therefore pessimistically estimate $D_t$ as if $D_t$ would always increase, which gives (apart from the boundary effects described in Lemma~\ref{lboundary}) an increase of $| \|x^1\|_1 - \|x^2\|_1 |$. Since $D_t$ is small, these increases are small as well, as again ensured by Lemma~\ref{lsample}. With this observation, we can argue that we have a $D_t$ value of $O(k)$ for almost $\mu$ iterations, which suffices to sample the optimum with high probability.

All the arguments above need that the frequencies are bounded away from the lower boundary of $\frac 1n$, more precisely, that they are $\Omega(1)$ at all times. In the first two phases, we ensure this via Lemma~\ref{lconc}, our general result for random processes that are not Markov processes. To this aim, we estimate the probabilities of certain frequency changes by adjusting this data from the \onemax process (Lemma~\ref{lonemax2}, taken from  Sudholt and Witt~\cite{SudholtW16}) via a pessimistic estimate of the negative influence of search points sampled in the gap. For the third phase, the fact that this phase only last $o(\mu)$ iterations implies that frequencies change by at most $o(1)$, hence the $\Omega(1)$ lower bound remains intact.

\subsection{Technical Ingredients of the Main Proof}

In this section, we collect the central arguments needed in the proof of our main result. Since we hope that some arguments are helpful for other runtime analyses of EDAs, we fix no general notation apart from the one defined in Algorithm~\ref{alg:cga} (at the price of occasionally restating a notion).

We frequently use the following estimate, which states that the \onemax fitness of a search point sampled from $\Sample(f)$ is close to the expected \onemax fitness $\|f\|_1$. Since we mostly need such results for frequency vectors close to $(1, \ldots, 1)$, we formulate this result in terms of distances to the maximum values. 

\begin{lemma}\label{lsample}
  Let $f \in [0,1]^n$, $D \coloneqq n - \|f\|_1$, $D^- \le D \le D^+$, $x \sim \Sample(f)$, and $d(x) \coloneqq n - \|x\|_1$. Then for all $\delta \in [0,1]$, we have
  \begin{align*}
  \Pr[d(x) \ge (1+\delta) D^+] & \le \exp(-\tfrac 13 \delta^2 D^+),\\
  \Pr[d(x) \le (1-\delta) D^-] & \le \exp(-\tfrac 12 \delta^2 D^-).
  \end{align*}
\end{lemma}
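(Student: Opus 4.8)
The plan is to identify $d(x)$ as a sum of independent $\{0,1\}$-valued random variables and then invoke the standard multiplicative Chernoff bounds, in the one-sided form that only requires an over- (resp.\ under-)estimate of the expectation. Writing $x = (x_1,\dots,x_n) \sim \Sample(f)$, the definition of $\Sample$ gives that the $x_i$ are independent with $\Pr[x_i = 1] = f_i$, so the indicators $1 - x_i$ are independent Bernoulli variables with success probability $1 - f_i$. Hence $d(x) = n - \|x\|_1 = \sum_{i=1}^n (1 - x_i)$ has expectation $E[d(x)] = \sum_{i=1}^n (1 - f_i) = n - \|f\|_1 = D$, and by assumption $D^- \le D = E[d(x)] \le D^+$.

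For the upper tail I would apply the Chernoff bound stating that $\Pr[Y \ge (1+\delta)\overline\mu] \le \exp(-\tfrac13\delta^2 \overline\mu)$ for $\delta \in [0,1]$ whenever $Y$ is a sum of independent $\{0,1\}$-valued variables with $E[Y] \le \overline\mu$, used with $Y = d(x)$ and $\overline\mu = D^+$; this is admissible since $E[d(x)] = D \le D^+$, and it directly yields the first inequality. For the lower tail I would symmetrically apply $\Pr[Y \le (1-\delta)\underline\mu] \le \exp(-\tfrac12\delta^2 \underline\mu)$ for $\delta \in [0,1]$ whenever $E[Y] \ge \underline\mu$, with $\underline\mu = D^-$, which is admissible since $E[d(x)] = D \ge D^-$, giving the second inequality.

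The only point requiring a remark is why these one-sided forms hold with an estimate of the expectation in place of the true expectation; this is classical, but for completeness one can spell out the exponential-moment argument. For $t > 0$, $E[e^{t\,d(x)}] = \prod_{i=1}^n \bigl(1 + (1 - f_i)(e^t - 1)\bigr) \le \prod_{i=1}^n \exp\bigl((1 - f_i)(e^t - 1)\bigr) = \exp\bigl(D(e^t - 1)\bigr) \le \exp\bigl(D^+(e^t - 1)\bigr)$, where the last step uses $e^t - 1 > 0$ and $D \le D^+$; Markov's inequality applied to $e^{t\,d(x)}$ with $t = \ln(1+\delta)$, together with the elementary estimate $(1+\delta)\ln(1+\delta) - \delta \ge \tfrac13\delta^2$ on $[0,1]$, gives the first bound. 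The lower tail is the same computation with $t < 0$, so that $e^t - 1 < 0$ and consequently $D \ge D^-$ is the relevant direction, combined with $(1-\delta)\ln(1-\delta) + \delta \ge \tfrac12\delta^2$ on $[0,1]$; alternatively one simply cites a textbook Chernoff bound already stated in this ``estimate of the expectation'' form. I do not anticipate any genuine obstacle: the lemma is essentially a repackaging of the Chernoff bound in the ``distance from the all-ones string'' parametrization convenient for the rest of the paper, and the only mild care needed is tracking that $D^- \le D \le D^+$ is used in the right direction in each of the two cases.
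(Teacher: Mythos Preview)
Your proposal is correct and follows essentially the same approach as the paper: write $d(x)$ as a sum of independent Bernoulli variables with mean $D$ and invoke the multiplicative Chernoff bounds. The paper's proof is terser---it simply cites the Chernoff bounds (including a reference to the version that allows an estimate of the expectation) without spelling out the exponential-moment argument you give---but the underlying argument is identical.
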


\begin{proof}
  The random variable $n-\|x\|_1$ can be written as a sum $n-\|x\|_1 \eqqcolon Z = \sum_{i=1}^n Z_i$ of $n$ independent binary random variables $Z_1, \dots, Z_n$ such that $\Pr[Z_i = 1] = 1 - f_{i}$. By definition, $E[Z] = D$. The claims follow directly from the classic multiplicative Chernoff bounds (Theorem~1 in~\cite{Hoeffding63} or, e.g., Theorems 10.1 and 10.5 together with Section~10.1.8 in the survey~\cite{Doerr18bookchapter}).
\end{proof}

We need the lemma above in particular to argue that the probability to sample a search point in the gap region of the $\jump$ function is small. For the $\jump_{nk}$ function, we observe that when $D \coloneqq n - \|f\|_1$ is at least $2k$, then the probability that $x \sim \Sample(f)$ lies in the gap, that is, has $n-k < \|x\|_1 < n$, is $e^{-\Omega(k)}$. We can also get low constant probabilities for sampling in the gap when $D \ge k + \Omega(\sqrt k)$ with large implicit constant. In~\cite[Lemma~3.2]{HasenohrlS18}, a gap probability of at most $1 - 1/\sqrt 2 \le 0.293$ is shown when $D \ge k+c$ for $c$ a sufficiently large constant and $k = o(n)$, but we are skeptical that this is true. Note that when $f = \frac{n-k-c}{n} \mathbf{1}_n$, then $X = n - \|x\|_1$ with $x \sim \Sample(f)$ follows a binomial distribution with parameters $n$ and $\frac{k+c}{n}$. Hence if $k$ is large compared to $c$, then $\Pr[X < k] = \Pr[X < E[X] - c] \approx \frac 12$.

When, in the notation of Algorithm~\ref{alg:cga}, the current frequency vector $f_t$ is such that $f_{it} \in \{\frac 1n, 1 - \frac 1n\}$ for some $i \in [1..n]$, then it may happen that $f'_{t+1} \notin [\frac 1n, 1-\frac 1n]$ and consequently $f_{t+1}$ does not satisfy the nice relation $f_{t+1} = f_t + \frac 1 \mu (y^1 - y^2)$. The following lemma quantifies these discrepancies.
 
\begin{lemma}\label{lboundary}
  Let $P = 2 \frac 1n (1-\frac 1n)$. Let $t \ge 0$. Using the notation given in Algorithm~\ref{alg:cga}, consider iteration $t+1$ of a run of the cGA started with a fixed frequency vector $f_t \in [\frac 1n, 1-\frac 1n]^n$. 
  \begin{enumerate} 
  \item\label{it:boundaryL} Let $L \coloneqq \{i \in [1..n] \mid f_{it} = \frac 1n\}$, $\ell = |L|$, and $M \coloneqq \{i \in L \mid x^1_i \neq x^2_i\}$. Then $|M| \sim \Bin(\ell,P)$ and $\|f_{t+1}\|_1 - \|f'_{t+1}\|_1 \preceq \|(f_{t+1})_{|L}\|_1 - \|(f'_{t+1})_{|L}\|_1 \preceq \tfrac 1\mu |M| \preceq \tfrac 1\mu \Bin(n,\tfrac 2n)$. 
  \item\label{it:boundaryU} Let $L \coloneqq \{i \in [1..n] \mid f_{it} = 1 - \frac 1n\}$, $\ell = |L|$, and $M \coloneqq \{i \in L \mid x^1_i \neq x^2_i\}$. Then $|M| \sim \Bin(\ell,P)$ and $\|f'_{t+1}\|_1 - \|f_{t+1}\|_1 \preceq \|(f'_{t+1})_{|L}\|_1 - \|(f_{t+1})_{|L}\|_1 \preceq \tfrac 1\mu |M| \preceq \tfrac 1\mu \Bin(n,\tfrac 2n)$.
  \end{enumerate} 
\end{lemma}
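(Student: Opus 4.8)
The plan is to analyze iteration $t+1$ restricted to the index set $L$ where the frequencies sit at a boundary, and to track exactly when and by how much the $\minmax$ capping destroys the clean update $f_{t+1} = f_t + \frac1\mu(y^1-y^2)$. Consider part~\ref{it:boundaryL}; part~\ref{it:boundaryU} is symmetric. For $i \notin L$ we have $f_{it} \in [\frac1n + \frac1\mu, 1-\frac1n]$ by the well-behaved frequency assumption, so a single $\pm\frac1\mu$ step keeps $f'_{i,t+1}$ inside $[\frac1n, 1-\frac1n]$ and no capping occurs there, hence these coordinates contribute equally to $\|f_{t+1}\|_1$ and $\|f'_{t+1}\|_1$ and drop out of the difference; this already gives the first inequality $\|f_{t+1}\|_1 - \|f'_{t+1}\|_1 \preceq \|(f_{t+1})_{|L}\|_1 - \|(f'_{t+1})_{|L}\|_1$ (in fact with equality once one also checks coordinates in $L$ can only make the restricted difference larger, which is what the capping does). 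For $i \in L$, i.e.\ $f_{it} = \frac1n$, we have $f'_{i,t+1} = \frac1n + \frac1\mu(y^1_i - y^2_i) \in \{\frac1n - \frac1\mu, \frac1n, \frac1n + \frac1\mu\}$; capping only acts in the case $y^1_i - y^2_i = -1$, where it replaces $\frac1n - \frac1\mu$ by $\frac1n$, raising that coordinate by exactly $\frac1\mu$. Summing over $i \in L$, the total increase is $\frac1\mu$ times the number of indices $i \in L$ with $y^1_i \neq y^2_i$ and $y^2_i = 1$, which is at most $\frac1\mu |M|$ where $M = \{i \in L : x^1_i \neq x^2_i\} = \{i \in L : y^1_i \neq y^2_i\}$. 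This yields the second $\preceq$.

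For the distributional claims, first I would note that for each $i \in L$ the event $\{x^1_i \neq x^2_i\}$ has probability exactly $2 f_{it}(1-f_{it}) = 2\cdot\frac1n\cdot(1-\frac1n) = P$ (since $x^1_i, x^2_i$ are independent Bernoulli$(\frac1n)$), and these events are independent across $i$ because the samples are drawn coordinate-wise independently; hence $|M| \sim \Bin(\ell, P)$. Then the final stochastic-domination step $\frac1\mu|M| \preceq \frac1\mu\Bin(n,\frac2n)$ follows from $\ell \le n$ together with $P = \frac2n(1-\frac1n) \le \frac2n$: a binomial is stochastically monotone in both its number of trials and its success probability, so $\Bin(\ell,P) \preceq \Bin(n,P) \preceq \Bin(n,\frac2n)$.

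I do not expect a genuine obstacle here; the whole statement is essentially bookkeeping about the $\minmax$ operator. The one point that needs a little care is the very first inequality in each chain: one must be sure that on the coordinates \emph{outside} $L$ no capping happens in this single iteration (this is exactly where the well-behaved frequency assumption, guaranteeing every reachable frequency is in $F_\mu$ and hence at distance $\ge \frac1\mu$ from both boundaries unless it equals a boundary, is used), so that $\|f_{t+1}\|_1 - \|f'_{t+1}\|_1$ equals the corresponding difference restricted to $L$; and then that within $L$ the capping can only \emph{increase} frequencies (in part~\ref{it:boundaryL}) — it never decreases them — so replacing the full index set by $L$ is a valid $\preceq$. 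The second subtlety is purely notational: keeping straight that $\{x^1_i \neq x^2_i\}$ and $\{y^1_i \neq y^2_i\}$ are the same event (the relabeling $x \mapsto y$ is a permutation of the pair depending only on fitness), so that $M$ is well-defined independently of which of $x^1,x^2$ turned out fitter. Once these are observed, the Chernoff-free argument above closes both parts.
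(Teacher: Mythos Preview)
Your approach is essentially the same as the paper's, and the distributional claims ($|M|\sim\Bin(\ell,P)$ and the domination $\Bin(\ell,P)\preceq\Bin(n,\tfrac2n)$) as well as the second inequality are handled correctly.

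There is one slip in your argument for the first inequality. You assert that for $i\notin L$ no capping occurs because $f_{it}\in[\tfrac1n+\tfrac1\mu,\,1-\tfrac1n]$ keeps $f'_{i,t+1}$ inside $[\tfrac1n,1-\tfrac1n]$. But in part~\ref{it:boundaryL} the set $L$ contains only the \emph{lower}-boundary coordinates; an index $i\notin L$ may still satisfy $f_{it}=1-\tfrac1n$, and then a $+\tfrac1\mu$ step gives $f'_{i,t+1}=1-\tfrac1n+\tfrac1\mu$, which \emph{is} capped. So capping can occur outside $L$, and the difference need not equal the $L$-restricted one as your parenthetical suggests. The fix is immediate and is exactly what the paper does: observe that $f_{i,t+1}>f'_{i,t+1}$ is only possible when $f'_{i,t+1}<\tfrac1n$, hence when $f_{it}=\tfrac1n$, i.e.\ $i\in L$. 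For all other $i$ (including upper-boundary ones) we have $f_{i,t+1}-f'_{i,t+1}\le 0$, so dropping those coordinates can only increase the sum, which yields the pointwise inequality $\|f_{t+1}\|_1-\|f'_{t+1}\|_1\le\|(f_{t+1})_{|L}\|_1-\|(f'_{t+1})_{|L}\|_1$ and hence the stochastic domination. With this correction your proof is complete and matches the paper's.
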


\begin{proof}
By symmetry, it suffices to prove the first part. For an $i \in L$, we have $\Pr[x^1_i \neq x^2_i] = 2 \frac 1n (1-\frac 1n) = P$. Since the bits of $x^1$ and $x^2$ were sampled independently, we have $|M| \sim \Bin(\ell,P)$.

By the well-behaved frequency assumption and the fact that $f'_{t+1} = f_t + \frac{1}{\mu} (y^1 - y^2)$ for binary vectors $y^1$ and $y^2$, we can have $f'_{i,t+1} < \frac 1n$ and thus $f_{i,t+1} > f'_{i,t+1}$ only when $f_{it} = \frac 1n$ and $x^1_i \neq x^2_i$, that is, when $i \in M$. This shows $\|f_{t+1}\|_1 - \|f'_{t+1}\|_1 \preceq \|(f_{t+1})_{|L}\|_1 - \|(f'_{t+1})_{|L}\|_1$. 

Since $f_{i,t+1} > f'_{i,t+1}$ implies $f_{i,t+1} = f'_{i,t+1} + \frac 1\mu$, we also have $\|(f_{t+1})_{|L}\|_1 - \|(f'_{t+1})_{|L}\|_1 \preceq \frac 1\mu |M| \preceq \tfrac 1\mu \Bin(n,\tfrac 2n)$.  
\end{proof}

Since sampling the optimum is particularly unlikely when frequencies are close to the lower boundary, we shall argue that the frequencies in a run of the cGA on \onemax stay away from the lower boundary for a decent time. 

A similar result was given in~\cite[Lemma~2.4]{HasenohrlS18}, however, the proof appears to be not complete. It seems to us that the main technical prerequisite of this result, Lemma~2.2~in~\cite{HasenohrlS18} with a proof of a little over one page in the condensed proceedings style, is not correct for two reasons. Since the proof of Lemma~2.2 never refers to the frequency boundaries, it is not clear if it is applicable for the cGA with these boundaries. Rather, a frequency vector having one entry $f_{it}=\frac 1n$ and another one $f_{jt}=1 - \frac 1n$ seems to be a counter-example (note that the frequency vector is called $p_t$ instead of $f_t$ in~\cite{HasenohrlS18}). However, also for the case without boundaries counter-examples seem to exist for all value of $\mu$, e.g., the frequency vector $f_t = (\frac 1 {100}, \frac 12)$. 

We did not see how to repair the otherwise elegant argument via the Azuma-Hoeffding inequality. For this reason, using a sequence of elementary reductions, we argue that the true random process of a frequency, which is not a Markov process when regarding one frequency in isolation, can be pessimistically replaced by a fair random walk on an unbounded frequency domain. For the analysis of the latter, classic Chernoff bounds can be used. This general approach was also taken in~\cite{Droste06}, however in the easier situation that there are no frequency boundaries and that the objective function is $\onemax$.

\begin{lemma}\label{lconc}
  Let $\mu$ be arbitrary except that it satisfies the well-behaved frequency assumption. Let $\eps > 0 $. Let $Z_0, Z_1, \dots$ be any random process on $F_\mu$ such that (i)~$Z_0 = \frac 12$, (ii)~for all $t = 0, 1, \dots$ there are numbers $p_t, q_t, r_t \in [0,1]$, depending on $Z_0, Z_1, \dots, Z_t$, such that $p_t + q_t + r_t = 1$ and, conditional on $Z_0, \dots, Z_t$,
  \begin{align*}
  \Pr[Z_{t+1} = Z_t] &=p_t\\
  \Pr[Z_{t+1} = Z_t + \tfrac 1 \mu] &= q_t\\
  \Pr[Z_{t+1} = Z_t - \tfrac 1 \mu] &= r_t.
  \end{align*}
  We further assume that $r_t = 0$ when $Z_t = \frac 1n$, that $q_t = 0$ if $Z_t = 1 - \frac 1n$, and that $q_t \ge r_t$ when $Z_t \neq 1 - \frac 1n$. 
  Then for all $T \in \N$, 
  \[\Pr[\exists t \in [0..T] : Z_t < \tfrac 12 - \eps] \le 2 \exp\left(-\frac{2 \mu^2 \eps^2}{T}\right).\]
\end{lemma}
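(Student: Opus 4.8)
The plan is to replace the complicated, non-Markovian process $(Z_t)$ by a simple reflected fair random walk that stays below it along every sample path, and then apply a concentration bound of Hoeffding/Azuma type to that walk.

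First I would dispose of a trivial case: if $\eps\ge\tfrac12-\tfrac1n$ then the event in question is empty, since $Z_t\ge\tfrac1n$ always, so assume $0<\eps<\tfrac12-\tfrac1n$. The one feature of $(Z_t)$ that resists a clean ``biased random walk'' treatment is the upper boundary: at $Z_t=1-\tfrac1n$ the hypotheses only give $q_t=0$, while $r_t$ may be as large as $1$, so there the process is not at least as likely to move up as down. Since we only care about $Z_t$ falling below $\tfrac12-\eps<\tfrac12\le1-\tfrac1n$, I would truncate the process at $\tfrac12$: set $\hat Z_t\coloneqq\min\{Z_t,\tfrac12\}$. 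Then $\hat Z_0=\tfrac12$, $\hat Z_t\le Z_t$, and, because $\tfrac12-\eps<\tfrac12$, the events $\{Z_t<\tfrac12-\eps\}$ and $\{\hat Z_t<\tfrac12-\eps\}$ coincide. Using the hypotheses one checks that, conditionally on the history, $\hat Z$ moves by $0$ or $\pm\tfrac1\mu$; that its probability $\rho_t$ of a downward move is $\le\tfrac12$ at \emph{every} reachable state — for $Z_t<\tfrac12$ because then $Z_t\ne1-\tfrac1n$, so $q_t\ge r_t$ and $q_t+r_t\le1$; at the lower boundary $\tfrac1n$ because $r_t=0$; at the truncation level $\tfrac12$ because there $\hat Z$ can only stay or step down; and that its probability of an upward move is $\ge\rho_t$ whenever $\hat Z_t<\tfrac12$.

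Next I would couple $(\hat Z_t)$ with a process $(W_t)$ on $\tfrac12+\tfrac1\mu\Z$ that is likewise truncated above at $\tfrac12$: put $W_0=\tfrac12$ and, conditionally on the full history and with the same $\rho_t$ as above, move down by $\tfrac1\mu$ with probability $\rho_t$, up by $\tfrac1\mu$ with probability $\rho_t$ (the ``up'' move being suppressed, i.e.\ replaced by ``stay'', when $W_t=\tfrac12$), and stay otherwise; this is a valid distribution since $\rho_t\le\tfrac12$, and $W$ is a fair random walk reflected at $\tfrac12$ — a martingale away from that level, with increments of absolute value $\le\tfrac1\mu$ and no lower boundary. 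Driving $\hat Z$ and $W$ by one shared uniform variable per step — down/down on $[0,\rho_t)$, up/up on $[\rho_t,2\rho_t)$, and ``$\hat Z$ up, $W$ stay'' on part of the remaining mass (available since $\hat Z$'s up-probability is $\ge\rho_t$) — one verifies by a short case distinction (using the slack $W_t\le\hat Z_t$ precisely when $W$'s move exceeds $\hat Z$'s) that the invariant $W_t\le\hat Z_t$ is preserved. Hence $\Pr[\exists t\le T:Z_t<\tfrac12-\eps]\le\Pr[\exists t\le T:W_t<\tfrac12-\eps]$.

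It then remains to bound $\Pr[\exists t\le T:W_t<\tfrac12-\eps]$, i.e.\ the probability that the reflected-at-$0$ fair walk $\tfrac12-W_t$ reaches level $>\eps$ within $T$ steps. This is exactly where classical tools apply cleanly: a maximal inequality for reflected random walks — or, via the reflection principle, a Chernoff bound for a symmetric binomial tail — bounds this probability and yields the stated $2\exp(-2\mu^2\eps^2/T)$. The genuine difficulty in the whole argument is structural, not computational: since one frequency in isolation is not a Markov chain, one cannot ``analyse the chain'' directly, so the comparison walk $W$ must be built jointly with the \emph{entire} process $(Z_t)$ and shown to dominate it along every sample path; and the single point where a naive biased-random-walk argument actually breaks is the upper boundary $1-\tfrac1n$, which the truncation at $\tfrac12$ is designed to eliminate. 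This ``sequence of elementary reductions'' is, I expect, also the place where the Azuma-based argument of~\cite{HasenohrlS18} referred to just before the statement fails.
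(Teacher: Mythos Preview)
Your approach is correct and closely parallels the paper's: both reduce the non-Markovian process to a fair random walk dominated by $Z$ and then apply a Hoeffding-type maximal inequality. The organisation differs. The paper first argues one may take $p_t=0$ (remove laziness) and $q_t=r_t$ away from the boundaries (remove the upward bias), obtaining an unbiased walk with reflecting boundaries on $F_\mu$; then, since \emph{both} boundaries $\tfrac1n$ and $1-\tfrac1n$ lie outside $[\tfrac12-\eps,\tfrac12+\eps]$, it simply drops them and bounds the \emph{two-sided} exit probability of an unreflected simple symmetric walk $\tfrac12+\sum_{i\le t}X_i$ with $X_i$ i.i.d.\ uniform on $\{\pm\tfrac1\mu\}$. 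You instead neutralise the upper boundary by truncating at $\tfrac12$ and couple with a reflected-at-$\tfrac12$ walk $W$; your coupling is more explicit than the paper's terse ``we can assume'' reductions, which is a plus.

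One point in your final step deserves care. Your $W$ carries history-dependent laziness $\rho_t$, so it is not a simple symmetric walk and neither the reflection principle nor a ``symmetric binomial tail'' applies to it directly; applying Azuma to $W$ itself also does not work cleanly, since $W$ is only a supermartingale (the reflection at $\tfrac12$ gives non-positive drift, which is the wrong sign for the tail you need). The fix is exactly the device the paper uses at the outset: time-change $W$ by deleting its stay-steps. Conditional on moving, the direction is fair away from $\tfrac12$ and deterministic (down) at $\tfrac12$, irrespective of $\rho_t$; hence the time-changed process is a genuine reflected simple random walk, and since at most $T$ moves occur in $T$ steps, you have $\min_{t\le T}W_t\ge\min_{m\le T}\tilde W_m$ with $\tfrac12-\tilde W$ distributed as $|\tilde S|$ for a simple symmetric walk $\tilde S$. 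Now Hoeffding's maximal inequality gives the stated bound.
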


\begin{proof}
  We first observe that we can assume $p_t = 0$ for all $t$. The event $Z_{t+1} = Z_t$ that the process does not move only slows down the process in the sense that it visits fewer states. Similarly, we can assume that $q_t = r_t$ except in the cases $Z_t \in \{\frac 1n, 1-\frac 1n\}$. For this now uniquely defined process, which is an unbiased random walk with reflecting boundaries, we show
\[\Pr[\exists t \in [0..T] : Z_t \notin [\tfrac 12-\eps,\tfrac 12 +\eps]] \le 2 \exp\left(-\frac{2 \mu^2 \eps^2}{T}\right).\]

 Being interested in the event that the process reaches a state outside ${[\tfrac 12-\eps,\tfrac 12 +\eps]}$ at least once, we can also drop the boundary conditions and assume that we have $Z_{t+1} \in \{Z_t - \frac 1 {\mu}, Z_t + \frac 1 \mu\}$ uniformly at random at all times $t$. We can now rewrite the $Z_t$ as follows. Let $X_1, \dots, X_T$ be independent random variables uniformly distributed on $\{-\frac 1 \mu, \frac 1 \mu\}$. Then for all $t$, $Z_t$ has the same distribution as $\frac 12 + \sum_{i=1}^t X_t$. Consequently, by the additive Chernoff bound (in the sharper version working also for maxima, see~(2.17) and Theorem~2 in~\cite{Hoeffding63} or, e.g., Theorem~10.31 together with Theorem~10.9 in~\cite{Doerr18bookchapter}), we have 
\begin{align*}
\Pr[\exists &t \in [0..T] : Z_t \notin [\tfrac 12-\eps,\tfrac 12 +\eps]] \\
& = \Pr[\exists t \in [0..T] : |Z_t - E[Z_t]| > \eps] \\
&\le 2 \exp\left(-\frac{2 \eps^2}{T (1/\mu)^2}\right) = 2 \exp\left(-\frac{2 \mu^2 \eps^2}{T}\right).
\end{align*}  
\end{proof}

The following result is a weaker form of what was shown in the proof of Lemma~5 in~\cite{Droste06}.

\begin{lemma}\label{ldroste}
  There is a constant $C > 0$ such that the following holds. Let $n \in \N$ and $D \in \N$. Let $f \in [\frac 13,1]^n$ such that $\|f\|_1 \le n - D$. Let $x^1, x^2 \sim \Sample(f)$ independently. Then 
  \[\Pr\left[\left|\|x^1\|_1 - \|x^2\|_1\right| \ge \tfrac 15 \sqrt{D}\right] \ge C.\]
\end{lemma}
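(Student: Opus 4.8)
The plan is to show that the difference $\|x^1\|_1 - \|x^2\|_1$ has standard deviation of order $\sqrt D$ and is sufficiently ``spread out'' that it exceeds $\frac15\sqrt D$ in absolute value with constant probability. Write $d(x^j) = n - \|x^j\|_1 = \sum_{i=1}^n Z_i^{(j)}$ where the $Z_i^{(j)}$ are independent Bernoulli variables with $\Pr[Z_i^{(j)} = 1] = 1 - f_i$, so that $\|x^1\|_1 - \|x^2\|_1 = d(x^2) - d(x^1) = \sum_{i=1}^n (Z_i^{(2)} - Z_i^{(1)})$. Each summand $Y_i \coloneqq Z_i^{(2)} - Z_i^{(1)}$ is independent, symmetric, takes values in $\{-1,0,1\}$, and has $\Var(Y_i) = 2 f_i(1-f_i) \ge 2 f_i (1 - f_i)$. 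Since $f_i \le 1$ and, by hypothesis, $\sum_i (1 - f_i) \ge D$, while $f_i \ge \frac13$ gives $f_i(1-f_i) \ge \frac13 (1-f_i)$, we get $\Var\!\big(\sum_i Y_i\big) = \sum_i 2 f_i(1-f_i) \ge \frac23 \sum_i (1-f_i) \ge \frac23 D$.

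First I would reduce to a statement about a single symmetric sum: let $S \coloneqq \sum_{i=1}^n Y_i$, so $E[S] = 0$, $S$ is symmetric about $0$, and $\sigma^2 \coloneqq \Var(S) \ge \frac23 D$. The goal is to show $\Pr[|S| \ge \frac15\sqrt D] \ge C$ for some absolute constant $C > 0$; since $\frac15 \sqrt D \le \frac15 \sqrt{\tfrac32}\,\sigma < \frac14 \sigma$, it suffices to prove the scale-free bound $\Pr[|S| \ge \frac14 \sigma] \ge C$. This is a standard anti-concentration fact for sums of bounded independent symmetric variables, provable in an elementary way via the Paley--Zygmund inequality: one has
\[
\Pr[S^2 \ge \tfrac14 \sigma^2] \ge \frac{\big(E[S^2] - \tfrac14\sigma^2\big)^2}{E[S^4]} = \frac{(\tfrac34 \sigma^2)^2}{E[S^4]}.
\]
It remains to bound the fourth moment $E[S^4]$ from above by a constant times $\sigma^4$. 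Expanding and using independence and $E[Y_i] = 0$, only terms of the form $E[Y_i^4]$ and $E[Y_i^2]E[Y_j^2]$ survive, giving $E[S^4] = \sum_i E[Y_i^4] + 3\sum_{i\neq j} E[Y_i^2]E[Y_j^2] \le \sum_i E[Y_i^2] + 3\sigma^4 = \sigma^2 + 3\sigma^4$, where I used $|Y_i| \le 1$ so $E[Y_i^4] \le E[Y_i^2]$. If $\sigma^2 \ge 1$ (which holds once $D \ge 2$, say; the finitely many small cases where $D \in \{0,1\}$ are trivial or can be absorbed into the constant $C$ since then $\frac15\sqrt D < 1$ and the event $|\,\|x^1\|_1-\|x^2\|_1\,|\ge 1$ still has constant probability as long as not all $f_i=1$), then $E[S^4] \le 4\sigma^4$, and Paley--Zygmund yields $\Pr[S^2 \ge \tfrac14\sigma^2] \ge \frac{9/16}{4} = \frac{9}{64}$, hence $\Pr[|S| \ge \frac14\sigma] \ge \frac{9}{64} =: C$.

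The main obstacle, such as it is, is handling the degenerate regime: if $D$ is bounded (or even $0$) the claimed inequality is essentially vacuous, but one must be slightly careful that the constant $C$ claimed in the lemma is genuinely uniform — in particular when $D = 0$ the right-hand-side event is $\{|\,\|x^1\|_1 - \|x^2\|_1\,| \ge 0\}$, which has probability $1$, so there is nothing to prove, and for $D \in \{1,2,3,\dots\}$ small the threshold $\frac15\sqrt D \le \frac15\sqrt 3 < 1$ means the event is just $\{\|x^1\|_1 \neq \|x^2\|_1\}$, whose probability is bounded below by a constant whenever $\sum_i f_i(1-f_i) = \Omega(D) \ge \Omega(1)$ — again covered by the Paley--Zygmund computation applied with $\frac14\sigma$ replaced by the actual (constant) threshold, or simply by noting $\Var(S) > 0$. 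Thus the one real step is the fourth-moment bound above; everything else is bookkeeping, and one takes $C = \frac{9}{64}$ (or any smaller absolute constant to comfortably cover the finitely many small-$D$ cases).
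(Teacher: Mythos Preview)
Your proof is correct. The paper itself does not give a proof of this lemma; it simply remarks that the statement is a weaker form of what was shown in the proof of Lemma~5 in Droste~\cite{Droste06} and defers to that reference. Droste's original argument proceeds via more direct distributional computations on the Bernoulli sums, whereas your route---computing $\Var(S)\ge\tfrac23 D$ and then applying the Paley--Zygmund inequality to $S^2$ together with the fourth-moment bound $E[S^4]\le\sigma^2+3\sigma^4$---is cleaner and entirely self-contained. The gain of your approach is that it avoids any appeal to an external reference and makes the constant explicit.

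Two minor remarks. First, there is a harmless typo where you write ``$\Var(Y_i)=2f_i(1-f_i)\ge 2f_i(1-f_i)$''. Second, the event $\{S^2\ge\tfrac14\sigma^2\}$ that you bound is $\{|S|\ge\tfrac12\sigma\}$, which is \emph{stronger} than the $\{|S|\ge\tfrac14\sigma\}$ you announced as sufficient, so the implication still goes in the right direction. Your treatment of small $D$ can in fact be streamlined: since $\sigma^2\ge\tfrac23 D\ge\tfrac23$ for every $D\ge 1$, the very same Paley--Zygmund computation yields
\[
\Pr\!\left[|S|\ge\tfrac12\sigma\right]\;\ge\;\frac{9/16}{\sigma^{-2}+3}\;\ge\;\frac{9/16}{3/2+3}\;=\;\frac18
\]
uniformly, and $\tfrac12\sigma\ge\tfrac12\sqrt{2/3}\,\sqrt D>\tfrac15\sqrt D$, so no separate case analysis is actually needed.
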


Since we shall use that the optimization process of the cGA on a jump function is identical to the one on the \onemax function as long as no search point in the gap region is sampled, we find the following analysis of the optimization process on \onemax useful. It differs from Droste's analysis of the cGA on \onemax~\cite{Droste06} in that it regards the cGA with boundaries and in that it proves a high-probability statement for reaching a near-optimal frequency vector. We note that our analysis can easily be extended to also give a bound for the time to sample an optimal solution, but we do not need such a result (and in fact, such a result is implied by our main result). Also, a simplified version of our proof would apply to the cGA without boundaries.

\begin{lemma}\label{lonemax}
  Consider a run of the cGA with $\mu \ge \log_2 n$ on the \onemax benchmark function. Let $D_t \coloneqq n - \|f_t\|_1$ for all $t$. Let $K$ be a sufficiently large constant. Let $T$ be the first time that $D_t \le K$ or that there is an $i \in [1..n]$ with $f_{it} < \frac 13$. Then
  \[\Pr\left[T \ge \frac{10(2+\sqrt 2)}{C}\,\mu \sqrt n \right] = \exp(-\Omega(\mu)),\]
  where $C$ is the constant from Lemma~\ref{ldroste}.
\end{lemma}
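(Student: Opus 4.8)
The plan is to adapt Droste's drift argument for the cGA on \onemax~\cite{Droste06} to the setting with frequency boundaries, tracking the frequency distance $D_t = n-\|f_t\|_1$ as it passes through a sequence of geometrically shrinking scales, using Lemma~\ref{ldroste} for the per-step progress and Chernoff bounds for the concentration that yields the $\exp(-\Omega(\mu))$ failure probability.

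First I would record the one-step behaviour of $D_t$. As long as $T>t$, all frequencies are at least $\frac13$, so the lower boundary $\frac1n$ is never active (for $n$ large, since $\frac13-\frac1\mu\ge\frac1n$), and since on \onemax the fitter of the two sampled points has at least as many ones, the update raises $\|f_t\|_1$ by exactly $R_t\coloneqq\frac1\mu\bigl|\,\|x^1\|_1-\|x^2\|_1\,\bigr|$ before any capping at the upper boundary $1-\frac1n$. Writing $B_t$ for the loss caused by that capping, we get $D_{t+1}=D_t-R_t+B_t$, where by Lemma~\ref{lboundary}(ii) the loss $B_t$ is, conditionally on the past, stochastically dominated by $\frac1\mu\Bin(n,\frac2n)$. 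Two consequences: (i) summing independent dominating copies, the total boundary loss over the first $T^\ast\coloneqq\frac{10(2+\sqrt2)}{C}\mu\sqrt n$ iterations is dominated by $\frac1\mu\Bin(T^\ast n,\frac2n)$, which has mean $\Theta(\sqrt n)$ and, since $T^\ast\ge\mu$, by a Chernoff bound is $O(\sqrt n)$ except with probability $\exp(-\Omega(\mu))$; (ii) the loss-corrected potential $\hat D_t\coloneqq D_t+\sum_{s<t}B_s$ satisfies $\hat D_{t+1}=\hat D_t-R_t$, hence is non-increasing, with $\hat D_0=D_0=n/2$.

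The core is then a scale-by-scale progress bound on $\hat D_t$. Applying Lemma~\ref{ldroste} at any time $t<T$ with the integer $D\coloneqq\lfloor D_t\rfloor\ge D_t/2$ gives $R_t\ge\frac1{5\mu}\sqrt{D_t/2}$ with probability at least $C$, conditionally on the past. On the event that the total boundary loss is $O(\sqrt n)$ one has $D_t\ge\frac12\hat D_t$ whenever $\hat D_t=\Omega(\sqrt n)$, so in that regime each iteration decreases $\hat D_t$ by $\Omega\!\bigl(\frac1\mu\sqrt{\hat D_t}\bigr)$ with probability $\ge C$. Splitting $[\Theta(\sqrt n),\,n/2]$ into the $O(\log n)$ scales $[2^{-j-1}\tfrac n2,\,2^{-j}\tfrac n2]$, through which the monotone $\hat D_t$ passes in order, traversing scale $j$ means decreasing $\hat D_t$ by $\Theta(n2^{-j})$, which needs $\Theta(\mu\sqrt n\,2^{-j/2})$ successful steps and hence at most roughly $\frac2C\cdot 5\mu\sqrt n\,2^{-(j+1)/2}$ iterations except with probability $\exp(-\Omega(\mu))$ — a Chernoff bound on the number of the $\ge C$-probability successes, whose expectation is $\Omega(\mu\sqrt n\,2^{-j/2})=\Omega(\mu)$ throughout the range. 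Once $\hat D_t=O(\sqrt n)$, a final phase brings it below $K$ with a per-step decrease of $\Omega(\sqrt K/\mu)$ with probability $\ge C$, costing only a small multiple of $\frac\mu C\sqrt n$ iterations because $K$ is a large constant (and the Chernoff bound there again fails with probability $\exp(-\Omega(\mu))$); once $\hat D_t\le K$ we also have $D_t\le\hat D_t\le K$, so $T$ has occurred. Summing the per-scale budgets is a geometric series that, together with the final phase, totals at most $\frac{10(2+\sqrt2)}{C}\mu\sqrt n=T^\ast$; here the $\sqrt2+1$ factor is the geometric sum over the main scales and the extra additive $1$ the final phase. A union bound over the $O(\log n)$ scale-failure events and the boundary-loss event gives overall failure probability $O(\log n)\exp(-\Omega(\mu))=\exp(-\Omega(\mu))$, using $\mu\ge\log_2 n$.

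The main obstacle is precisely the interaction with the frequency boundaries that distinguishes this lemma from Droste's original analysis: unlike the unbounded case, $D_t$ need not be monotone, since capping a rising frequency at $1-\frac1n$ wastes part of a step and can momentarily increase $D_t$. Showing that these losses are globally only $O(\sqrt n)=o(n)$ and can therefore be absorbed into the potential (the $\hat D_t$ device) and repaid in the last scale thanks to $K$ being a sufficiently large constant is the delicate point; the remaining ingredients then follow the classical pattern. One further technical care is the conditioning: since $T$ itself is being bounded, the per-step estimates are justified only on $\{T>t\}$, so one works with the process stopped at $\min(T,T^\ast)$ and concludes $\Pr[T>T^\ast]\le\Pr[\hat D_{T^\ast}>K,\ T>T^\ast]\le\exp(-\Omega(\mu))$.
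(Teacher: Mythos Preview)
Your overall plan---geometric scales, Lemma~\ref{ldroste} for per-step progress, Chernoff for concentration, Lemma~\ref{lboundary} for the upper-boundary correction---matches the paper's. The difference is that you try to handle all boundary losses with a single global bound $\sum_{s<T^\ast} B_s = O(\sqrt n)$ and a compensated potential $\hat D_t$, whereas the paper accounts for the boundary loss phase by phase.

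Unfortunately the $\hat D_t$ device does not work as written, and this is a genuine gap rather than a typo. With your stated definition $\hat D_t = D_t + \sum_{s<t} B_s$ and your recursion $D_{t+1} = D_t - R_t + B_t$, one computes $\hat D_{t+1} = \hat D_t - R_t + 2B_t$, not $\hat D_t - R_t$; so $\hat D_t$ is \emph{not} monotone and the ``scales passed in order'' argument collapses. The sign that does give monotonicity is $\hat D_t = D_t - \sum_{s<t} B_s$, but then $D_t = \hat D_t + \sum_{s<t} B_s \ge \hat D_t$, and your final step ``$\hat D_t \le K \Rightarrow D_t \le \hat D_t \le K$'' fails: you only obtain $D_t \le K + O(\sqrt n)$. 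This cannot be repaired by tweaking constants, because the global loss over $T^\ast = \Theta(\mu\sqrt n)$ iterations genuinely has mean $\frac{2T^\ast}{\mu} = \Theta(\sqrt n)$; a single global bound therefore cannot carry $D_t$ down to a constant. The paper avoids this by bounding the boundary loss \emph{within each phase}: phase $i$ lasts $T_i = O(\mu\sqrt{d_{i+1}})$ iterations, so by Lemma~\ref{lboundary} its loss is $O(\sqrt{d_{i+1}}) = o(d_{i+1})$ with probability $1-\exp(-\Omega(\mu))$, and this is absorbed by the $\Theta(d_{i+1})$ progress of that phase (here one uses that $d_{i+1} \ge K$ is a large constant). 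Replacing your global bound by this per-phase accounting---tracking $D_t$ directly rather than through a potential---is the missing ingredient, and it is exactly what the paper does.
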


\begin{proof}
Consider a run of the cGA on \onemax. Define $D'_t \coloneqq n - \|f'_t\|_1$ for all $t \ge 1$.
Since we are done when we reach a frequency below $\frac 13$, we can in the following assume whenever convenient that $f_t \in [\frac 13,1]^n$. To be very precise, we note that we do not condition on this event, since the conditional probability space is harder to work with, among others, because there the bit values of the offspring are not sampled independently.

For $i = 1, 2, \dots$ let $d_i = 2^{-i} n$. Without loss of generality, we may assume that $K = 2^{-\ell-1}n$ for some $\ell \in \N$. Note that $\ell \le \log_2 n$. We say that the optimization process enters Phase $i$ (and thus leaves its current phase) when for the first time $D_t \le d_i$. Note that we stay in Phase $i$ even when after entering this phase $D_t$ increases beyond $d_i$. Note further that, by definition, the process starts in Phase~1. 

We analyze the time spend in Phase $i \le \ell$ and show that this time, with probability at least $1 - \exp(-\Omega(\mu))$, is at most $T_i = \lceil 20 \frac 1C \mu \sqrt{d_{i+1}} \rceil$. Let $t'$ be the iteration in which the process enters Phase $i$. To ease the argument, we now consider exactly $T_i$ iterations. In case the phase ends earlier, we shall from that point on regard an artificial process, with a slight abuse of notation also denoted by $D_t$ and $D'_t$, that satisfies the conditions 
\begin{align*}
&\Pr[D'_{t+1} = D_t - \tfrac 15 \sqrt{d_{i+1}} /\mu \mid D_t] = C,\\
&\Pr[D'_{t+1} = D_t \mid D_t] = 1-C,\\
&\Pr[D_{t+1} = D'_{t+1} \mid D'_{t+1}] = 1.
\end{align*}
Such an artificial extension of a process was, to the best of our knowledge, in the theory of evolutionary algorithms first used in~\cite{DoerrHK11}.

When all frequencies are at least $\frac 13$, by Lemma~\ref{ldroste} we have $\Pr[|\|x^1\|_1 - \|x^2\|_1| \ge \tfrac 15 \sqrt{D_t}] \ge C$ for an absolute constant $C$. Since we have $\|y^1\|_1 \ge \|y^2\|_1$ when optimizing \onemax, we have that $D'_{t+1}$ with probability at least $C$ satisfies $D'_{t+1} \le D_t - \tfrac 15 \sqrt{D_t} / \mu \le D_t - \tfrac 15 \sqrt{d_{i+1}} / \mu$. We call this a \emph{success}. Note that the probability for a success is at least $C$ regardless of what happened before in this phase. Consequently, in $T_i$ iterations, we not only have an expected number of at least $20 \mu \sqrt{d_{i+1}}$ successes, but by Lemma~11 of~\cite{DoerrJ10} and the multiplicative Chernoff bounds we also have at least $10 \mu \sqrt{d_{i+1}}$ successes with probability at least $1 - \exp(-\tfrac 52 \mu \sqrt{d_{i+1}})$. Note that with probability one we have $D'_{t+1} \le D_t$, again because $\|y^1\|_1 \ge \|y^2\|_1$.

By Lemma~\ref{lboundary}~\ref{it:boundaryU}, we have $D_{t+1} \preceq D'_{t+1} + \Bin(n, \frac 2n)$, again regardless of what happened in earlier iterations. Consequently, the total number of times we increase $D_t$ due to reaching the upper boundary can be estimated by a sum of $T_i n$ independent binary random variables with success probability $\frac 2n$. Hence the expectation of this number is at most $2 T_i \le 40 \frac 1C \mu \sqrt{d_{i+1}} + 2$ and with probability at least $1 - \exp(-\frac{40}{3} \frac 1C \mu \sqrt{d_{i+1}})$ this number is at most $4 T_i = 80 \frac 1C \mu \sqrt{d_{i+1}}+4$.

Taking these two observations together, we see that with probability \[1 - \exp\left(-\frac 52 \mu \sqrt{d_{i+1}}\right)  - \exp\left(-\frac{40}{3} \frac 1C \mu \sqrt{d_{i+1}}\right)  = 1 - \exp\left(-\Omega(\mu)\right),\] we have 
\begin{align*}
D_{t'+T_i} &\le D_{t'} - 10 \mu \sqrt{d_{i+1}} \cdot \tfrac 15 \sqrt{d_{i+1}} / \mu + (80 \tfrac 1C \mu \sqrt{d_{i+1}}+4) / \mu \\
&= D_{t'} - 2 d_{i+1} + \tfrac{80}{C} \sqrt{d_{i+1}} + 4/\mu.
\end{align*} 
Since $K = 2^{-\ell-1} n \le d_{i+1}$ was chosen sufficiently sufficiently large, we have $D_{t'+T_i} \le D_{t'} - d_{i+1}$, that is, $D_{t'+T_i}$ belongs to a later phase already. Consequently, we have that with probability at least $1 - \exp(-\Omega(\mu))$, at most $T_i$ rounds are spend in Phase~$i$. 

We show our claim by computing 
\begin{align*}
\sum_{i=1}^{\ell} T_i &\le \ell + \sum_{i=1}^{\ell} 20 \frac 1C \mu \sqrt{2^{-(i+1)}n} \le \frac{10}{C}\mu \sqrt n \sum_{i=0}^\infty (2^{-1/2})^i \\
&= \frac{10}{C}\mu \sqrt n \frac{1}{1-2^{-1/2}} = \frac{10(2+\sqrt 2)}{C}\mu \sqrt n.
\end{align*}
\end{proof}

We now analyze the drift in $D_t$ when we are that close to the gap that we cannot assume anymore that we never sample a search point in the gap. To be precise, let us define the gap by 
\[G \coloneqq G_{nk} \coloneqq \{x \in \{0,1\}^n \mid n-k < \|x\|_1 < n\}.\] 
Let $G^+ \coloneqq G \cup \{(1,\dots,1)\}$.

A difficulty here, which was not treated fully rigorously in~\cite[Lemma~3.1]{HasenohrlS18}, is that the event $G_t$ that $x^1$ or $x^2$ lie in the gap and the random variable $|\|x^1\|_1 - \|x^2\|_1|$ are not independent. Consequently, the estimate $E[D_t - D_{t+1} \mid D_t] =  \frac 1\mu |\|x^1\|_1 - \|x^2\|_1| (1 - 2\Pr[G_t])$ is not correct. In fact, the correlation is indeed not in our favor. When $|\|x^1\|_1 - \|x^2\|_1|$ is large, the probability that a search point in the gap was sampled (and thus the frequency update is done in the unwanted direction) is higher. 

\begin{lemma}\label{ldrift}
  Let $\mu$ be arbitrary satisfying the well-behaved frequency assumption. Let $k \in [1..\frac 12 n - 1]$. Consider an iteration $t$ of the cGA optimizing $\jump_{nk}$ started with a frequency vector $f_t$ such that $D_t = n - \|f_{t}\|_1 \ge 2k$ and such that $f_{it} \ge \frac 13$ for all $i \in [1..n]$. Then 
  \[E[\mu D_{t} - \mu D_{t+1}]  \ge \tfrac{1}{5} C \sqrt{D_t} - 6 D_t \exp(-\tfrac 18 D_t) -2,\]
  where $C$ is the constant from Lemma~\ref{ldroste}.
\end{lemma}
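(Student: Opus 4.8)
The plan is to write the one-step change as
$\mu(D_t-D_{t+1}) = (\|y^1\|_1-\|y^2\|_1) + \mu(D'_{t+1}-D_{t+1})$, where $D'_{t+1}\coloneqq n-\|f'_{t+1}\|_1$; the first term equals $\mu(D_t-D'_{t+1})$ since $f'_{t+1}=f_t+\tfrac1\mu(y^1-y^2)$. Capping a frequency at $\tfrac1n$ can only raise $\|f_{t+1}\|_1$, hence only lowers $D_{t+1}$ and helps us; capping at $1-\tfrac1n$ lowers $\|f_{t+1}\|_1$, and by Lemma~\ref{lboundary}~\ref{it:boundaryU} the total such loss is $\preceq\tfrac1\mu\Bin(n,\tfrac2n)$, of expectation at most $2/\mu$. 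Thus $E[\mu(D'_{t+1}-D_{t+1})]\ge-2$, and it remains to prove $E[\|y^1\|_1-\|y^2\|_1]\ge\tfrac15 C\sqrt{D_t}-6D_t\exp(-\tfrac18 D_t)$.

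\textbf{Step 2 (reduce the comparison to \onemax plus a gap penalty).} Set $d\coloneqq\left|\|x^1\|_1-\|x^2\|_1\right|$ and let $G_t$ be the event that $x^1\in G$ or $x^2\in G$. Outside $G$ the function $\jump_{nk}$ is strictly increasing in $\|x\|_1$, so whenever $G_t$ does not occur the algorithm keeps the sample of larger \onemax-value and $\|y^1\|_1-\|y^2\|_1=d$; and since $\{\|y^1\|_1,\|y^2\|_1\}=\{\|x^1\|_1,\|x^2\|_1\}$ as multisets, we always have $\|y^1\|_1-\|y^2\|_1\ge-d$. Hence, pointwise, $\|y^1\|_1-\|y^2\|_1\ge d\,(1-2\,\mathbf{1}[G_t])$, so
\[E[\|y^1\|_1-\|y^2\|_1]\ \ge\ E[d]-2\,E[d\cdot\mathbf{1}[G_t]].\]
The point flagged before the lemma is that one must keep the joint expectation $E[d\cdot\mathbf{1}[G_t]]$ and \emph{not} factor it as $E[d]\Pr[G_t]$. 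Lemma~\ref{ldroste}, applied with the integer $\lfloor D_t\rfloor\le D_t$ (the rounding being harmless and absorbed into the additive constant), gives $E[d]\ge\tfrac15 C\sqrt{D_t}$.

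\textbf{Step 3 (the gap penalty is negligible).} A union bound and symmetry give $E[d\cdot\mathbf{1}[G_t]]\le2\,E[d\cdot\mathbf{1}[x^1\in G]]$. Using the deterministic inequality $d\le d(x^1)+d(x^2)$ with $d(x)\coloneqq n-\|x\|_1$, the fact that $d(x^1)<k$ on $\{x^1\in G\}$, and the independence of $x^2$ with $E[d(x^2)]=D_t$, one gets $E[d\cdot\mathbf{1}[x^1\in G]]\le(k+D_t)\Pr[x^1\in G]\le\tfrac32 D_t\,\Pr[x^1\in G]$, since $k\le\tfrac12 D_t$. Finally $x^1\in G$ forces $d(x^1)<k\le\tfrac12 D_t$, so the second bound of Lemma~\ref{lsample} with $D^-=D_t$ and $\delta=\tfrac12$ yields $\Pr[x^1\in G]\le\exp(-\tfrac18 D_t)$. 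Chaining these, $E[d\cdot\mathbf{1}[G_t]]\le3D_t\exp(-\tfrac18 D_t)$, hence $E[\|y^1\|_1-\|y^2\|_1]\ge\tfrac15 C\sqrt{D_t}-6D_t\exp(-\tfrac18 D_t)$, and combining with Step~1 proves the claim.

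\textbf{Main obstacle.} The only genuinely delicate point is the (unfavorable) dependence between $d$ being large and the event $G_t$ of sampling in the gap: the naive drift estimate overcounts the progress. I sidestep it with the crude pointwise bound $\|y^1\|_1-\|y^2\|_1\ge d(1-2\,\mathbf{1}[G_t])$ together with $d\le d(x^1)+d(x^2)$, which isolates an \emph{independent} second sample and an \emph{exponentially small} gap probability from Lemma~\ref{lsample}; the remaining parts (the boundary bookkeeping via Lemma~\ref{lboundary} and the non-integrality of $D_t$ in Lemma~\ref{ldroste}) are routine.
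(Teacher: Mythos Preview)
Your proof is correct and follows essentially the same route as the paper's. Both decompose the drift into a \onemax-like term minus a gap penalty, bound the penalty via $d \le d(x^1)+d(x^2)$ together with Lemma~\ref{lsample} (with $\delta=\tfrac12$), and peel off the boundary correction using Lemma~\ref{lboundary}. The only cosmetic differences are that you work with the pointwise inequality $\|y^1\|_1-\|y^2\|_1 \ge d(1-2\,\mathbf{1}[G_t])$ where the paper uses the equivalent conditional-expectation formulation $E[d]-2\Pr[G_t^+]E[d\mid G_t^+]$, and that you use $G$ rather than $G^+$ (which is fine, since on $\{0,1\}^n\setminus G$ the fitness is still monotone in $\|x\|_1$); you also flag the integrality issue in Lemma~\ref{ldroste}, which the paper glosses over.
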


\begin{proof}
From the definition of the cGA, we note that when $x^1$ and $x^2$ are both not in $G^+$, then $D'_{t+1} \coloneqq n - \|f'_{t+1}\|_1$ satisfies $D'_{t+1} = D_t - \frac 1\mu |\|x^1\|_1 - \|x^2\|_1|$. In all other cases, we have $D'_{t+1} \le D_t + \frac 1\mu |\|x^1\|_1 - \|x^2\|_1|$. Consequently, 
\begin{align*}
&E[\mu D_{t} - \mu D'_{t+1}]\\ 
&\ge \Pr[x^1, x^2 \notin G^+] \, E[|\|x^1\|_1 - \|x^2\|_1| \mid x^1, x^2 \notin G^+] \\
& \quad - \Pr[\{x^1, x^2\} \cap G^+ \neq \emptyset] \, E[|\|x^1\|_1 - \|x^2\|_1| \mid \{x^1, x^2\} \cap G^+ \neq \emptyset]\\
&= E[|\|x^1\|_1 - \|x^2\|_1|] \\
& \quad - 2 \Pr[\{x^1, x^2\} \cap G^+ \neq \emptyset] \, E[|\|x^1\|_1 - \|x^2\|_1| \mid \{x^1, x^2\} \cap G^+ \neq \emptyset].
\end{align*}
When the frequencies are all at least $\frac 13$, we conclude from Lemma~\ref{ldroste} that $E[|\|x^1\|_1 - \|x^2\|_1|] \ge \frac{1}{5} C \sqrt{D_t}$. 

For the contribution when search points are in~$G^+$, we first note that the second bound of Lemma~\ref{lsample} (with $\delta = \frac 12$ and $D^- = D_t$) and $D_t \ge 2k$ yield 
\begin{equation*}
\Pr[x^1 \in G^+] \le \Pr[d(x^1) \le \tfrac 12 D_t] \le \exp(-\tfrac 18 D_t).
\end{equation*}
Then, exploiting the symmetry between $x^1$ and $x^2$, counting the case $x^1, x^2 \in G^+$ twice, and using again $\frac 12 D_t \ge k$, we compute
\begin{align*}
\Pr[&\{x^1, x^2\} \cap G^+ \neq \emptyset] \, E[|\|x^1\|_1 - \|x^2\|_1| \mid \{x^1, x^2\} \cap G^+ \neq \emptyset] \\
& \le 2 \Pr[x^1 \in G^+] \, E[|\|x^1\|_1 - \|x^2\|_1| \mid x^1 \in G^+] \\
& \le 2 \Pr[x^1 \in G^+] \, \left(E[|\|x^1\|_1 - n| \mid x^1 \in G^+] + E[|n - \|x^2\|_1|]\right) \\
& \le 2 \Pr[x^1 \in G^+] \, \left(k + D_t\right) \\
& \le 2 \exp(-\tfrac 18 D_t) (\tfrac 12 D_t + D_t) = 3 \exp(-\tfrac 18 D_t) D_t.
\end{align*}

In summary, we have
\[E[\mu D_{t} - \mu D'_{t+1}] \ge \tfrac{1}{5} C \sqrt{D_t} - 6 D_t \exp(-\tfrac 18 D_t).\]

By Lemma~\ref{lboundary}, we further have $E[\mu D_{t+1} - \mu D'_{t+1}] \le 2$. Consequently, recalling that the linearity of expectation holds also for dependent random variables, we have 
\begin{align*}
E[\mu D_{t} - \mu D_{t+1}] &= E[\mu D_{t} - \mu D'_{t+1}] - E[\mu D_{t+1} - \mu D'_{t+1}] \\
&\ge \tfrac{1}{5} C \sqrt{D_t} - 6 D_t \exp(-\tfrac 18 D_t) -2.
\end{align*}
\end{proof}

The following elementary estimate gives a lower bound for the probability to sample the optimum.
\begin{lemma}\label{lopt}
  Let $0 < c < 1$ and $f \in [c,1]^n$. Let $x \sim \Sample(f)$. Then $\Pr[x = (1, \dots, 1)] \ge c^{(n - \|f\|_1)/(1-c)}$.
\end{lemma}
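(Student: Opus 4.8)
The plan is to compute $\Pr[x = (1,\dots,1)] = \prod_{i=1}^n f_i$ directly from the definition of $\Sample(f)$, take logarithms, and reduce the claim to a one-dimensional inequality proved by convexity of $\ln$. Writing $D \coloneqq n - \|f\|_1 = \sum_{i=1}^n (1 - f_i)$, the asserted bound $\prod_i f_i \ge c^{D/(1-c)}$ is, after taking logarithms and noting $\ln c < 0$, equivalent to
\[
\sum_{i=1}^n \ln f_i \;\ge\; \frac{\ln c}{1-c} \sum_{i=1}^n (1 - f_i).
\]
So it suffices to establish the pointwise inequality $\ln t \ge \frac{\ln c}{1-c}(1-t)$ for every $t \in [c,1]$, then sum over $i = 1, \dots, n$ (using $f_i \in [c,1]$) and exponentiate.

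For the pointwise inequality, I would argue that the right-hand side is precisely the affine function whose graph is the chord of $t \mapsto \ln t$ joining the points $(c, \ln c)$ and $(1, 0)$: that chord has slope $\frac{0 - \ln c}{1 - c} = \frac{-\ln c}{1-c}$ and passes through $(1,0)$, hence equals $\frac{-\ln c}{1-c}(t-1) = \frac{\ln c}{1-c}(1-t)$ at $t$. Since $\ln$ is concave on $(0,\infty)$, it lies above each of its chords, so $\ln t \ge \frac{\ln c}{1-c}(1-t)$ for all $t$ between $c$ and $1$. (Equivalently, one can set $g(t) = \ln t - \frac{\ln c}{1-c}(1-t)$, check $g(c) = g(1) = 0$ and $g'' < 0$, and conclude $g \ge 0$ on $[c,1]$.)

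There is essentially no hard part here; this is a routine convexity estimate. The only point that warrants a moment's care is keeping track of signs, since $\ln c$ is negative: multiplying or dividing the target inequality by $1-c > 0$ preserves directions, but it is worth stating explicitly that the reduction to the logarithmic form is an equivalence rather than a one-sided implication. Once the pointwise bound is in hand, summing and exponentiating yields $\prod_{i=1}^n f_i \ge \exp\!\big(\tfrac{\ln c}{1-c}\sum_i (1-f_i)\big) = c^{(n - \|f\|_1)/(1-c)}$, which is the claim.
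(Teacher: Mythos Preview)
Your proof is correct and is essentially the same as the paper's: both reduce to the pointwise bound $\ln f_i \ge \frac{\ln c}{1-c}(1-f_i)$ via concavity of the logarithm. The paper phrases this by writing $f_i = \alpha_i c + (1-\alpha_i)\cdot 1$ with $\alpha_i = \frac{1-f_i}{1-c}$ and applying the definition of concavity, which is exactly your chord argument.
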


\begin{proof}
  For $i \in [1..n]$, let $\alpha_i \coloneqq \frac{1-f_i}{1-c\phantom{_i}}$. Then $f_i = \alpha_i c + (1-\alpha_i) 1$ is the unique representation of $f_i$ as convex combination of $c$ and $1$. Since the logarithm is concave, we have 
  \[\log f_i = \log(\alpha_i c + (1-\alpha_i) 1) \ge \alpha_i \log c + (1-\alpha_i) \log 1 = \log(c^{\alpha_i} 1^{1-\alpha_i}).\] 
  Since the logarithm is monotonically increasing, this inequality implies ${f_i \ge c^{\alpha_i} 1^{1-\alpha_i}} = c^{\alpha_i}$. Consequently, 
  \begin{align*}
  \Pr[x = (1, \dots, 1)] & = \prod_{i=1}^n f_i  \ge \prod_{i=1}^n c^{\alpha_i} = c^{\sum_{i=1}^n \alpha_i} = c^{(n - \|f\|_1)/(1-c)}.
  \end{align*}
\end{proof}

We shall use the following estimate on the expected change of a frequency that is not affected by the boundaries. This result was proven in~\cite[Lemma~3]{SudholtW16}.

\begin{lemma}\label{lonemax2}
  Let $\mu$ be arbitrary. Consider a run of the cGA optimizing \onemax. Consider an iteration starting with a frequency vector $f_t$. Let $i \in [1..n]$ be such that $\frac 1n + \frac 1\mu \le f_{it} \le (1 - \frac 1n) - \frac 1\mu$. Then
  \[E[f_{i,t+1} - f_{it}] \ge \frac{2}{11} \frac{f_{it} (1-f_{it})}{\mu} \left(\sum_{j \neq i} f_{jt} (1-f_{jt})\right)^{-1/2}.\]
\end{lemma}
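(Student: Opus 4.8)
The plan is to first compute $E[f_{i,t+1}-f_{it}]$ essentially exactly, and then to bound it from below by an anti-concentration estimate. The hypothesis $\frac 1n + \frac 1\mu \le f_{it} \le (1-\frac 1n)-\frac 1\mu$ guarantees that the $\minmax$ capping in Algorithm~\ref{alg:cga} does not touch coordinate $i$ in this iteration, so $f_{i,t+1}-f_{it} = \frac 1\mu(y^1_i-y^2_i) \in \{-\frac 1\mu,0,\frac 1\mu\}$ and therefore $\mu\,E[f_{i,t+1}-f_{it}] = \Pr[y^1_i=1,y^2_i=0]-\Pr[y^1_i=0,y^2_i=1]$. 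Since on \onemax the winner $y^1$ is just the string of larger Hamming weight (ties broken towards $x^1$), I would condition on the pair $(x^1_i,x^2_i)$ and on $S\coloneqq\sum_{j\neq i}x^1_j$ and $S'\coloneqq\sum_{j\neq i}x^2_j$, using that $(x^1_i,x^2_i)$ is independent of $(S,S')$ and that $S,S'$ are i.i.d.

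Only the two events in $\{x^1_i\neq x^2_i\}$, each of probability $f_{it}(1-f_{it})$, contribute. Splitting into the four sub-cases according to which sample carries the one-bit and who wins the fitness comparison (keeping the deterministic tie-break straight), one obtains $\Pr[y^1_i=1,y^2_i=0]=f_{it}(1-f_{it})(\Pr[S\ge S'-1]+\Pr[S\le S'])$ and $\Pr[y^1_i=0,y^2_i=1]=f_{it}(1-f_{it})(\Pr[S\le S'-2]+\Pr[S\ge S'+1])$. Writing $D\coloneqq S-S'$, which is symmetric about $0$, the telescoping identities $\Pr[D\ge -1]-\Pr[D\ge 1]=\Pr[D=-1]+\Pr[D=0]$ and $\Pr[D\le 0]-\Pr[D\le -2]=\Pr[D=0]+\Pr[D=-1]$ collapse everything to
\[\mu\,E[f_{i,t+1}-f_{it}] = 2 f_{it}(1-f_{it})\bigl(\Pr[D=0]+\Pr[D=1]\bigr) \ge 2 f_{it}(1-f_{it})\,\Pr[S=S'].\]

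It remains to show $\Pr[S=S'] \ge \frac{1}{11}\bigl(\sum_{j\neq i}f_{jt}(1-f_{jt})\bigr)^{-1/2}$. Here $S$ is a sum of independent Bernoulli variables with $\Var[S]=\sigma^2\coloneqq\sum_{j\neq i}f_{jt}(1-f_{jt})$, and $\Pr[S=S']=\sum_k\Pr[S=k]^2$ is its collision probability, which I would bound below by Cauchy--Schwarz restricted to an interval. By Chebyshev's inequality, $\Pr[|S-E[S]|\le 2\sigma]\ge\frac 34$, and the interval $[E[S]-2\sigma,E[S]+2\sigma]$ contains at most $4\sigma+1$ integers, so $(\tfrac 34)^2\le(4\sigma+1)\sum_k\Pr[S=k]^2$, i.e.\ $\Pr[S=S']\ge\frac{9}{16(4\sigma+1)}$. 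Since the frequency boundaries force $f_{jt}(1-f_{jt})\ge\frac 1n(1-\frac 1n)$ and hence $\sigma\ge\frac{n-1}{n}\ge\frac 12$, a one-line computation gives $\frac{9}{16(4\sigma+1)}\ge\frac{1}{11\sigma}$, which together with the display yields $\mu\,E[f_{i,t+1}-f_{it}]\ge\frac{2}{11}f_{it}(1-f_{it})\,\sigma^{-1}$, the claim.

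I expect the main obstacle to be the bookkeeping in the second step: one must keep the strict-versus-non-strict comparisons (coming from the deterministic tie-break) consistent across all four sub-cases, since a single misplaced $\ge$ versus $>$ corrupts the final identity. The anti-concentration step is routine once it is recast as a collision probability, but extracting the precise constant $\tfrac{2}{11}$ does require a little care in choosing the Chebyshev radius and in counting integers in the interval; a cruder argument yields only $\Omega(1/\sigma)$ with a worse constant, which would still be enough for every application of this lemma in the present paper.
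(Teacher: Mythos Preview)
The paper does not actually prove this lemma; it quotes it verbatim from Sudholt and Witt~\cite[Lemma~3]{SudholtW16} and refers the reader there. So there is no in-paper proof to compare against, and your task reduces to checking that your argument is sound, which it is. Your case analysis of the four sub-events of $\{x^1_i\neq x^2_i\}$ with the tie-break $\calF(x^1)\ge\calF(x^2)$ is correct and yields exactly $\mu\,E[f_{i,t+1}-f_{it}]=2f_{it}(1-f_{it})(\Pr[D=0]+\Pr[D=-1])$; the Chebyshev--Cauchy--Schwarz lower bound $\Pr[S=S']\ge\tfrac{9}{16(4\sigma+1)}$ together with $\sigma\ge\tfrac{n-1}{n}\ge\tfrac12$ indeed gives $\Pr[S=S']\ge\tfrac{1}{11\sigma}$ (the critical case $\sigma=\tfrac12$ gives $\tfrac{9}{48}>\tfrac{2}{11}$), so the constant $\tfrac{2}{11}$ comes out.

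For context, the proof in~\cite{SudholtW16} proceeds along the same two-step skeleton: first isolate the event $x^1_i\neq x^2_i$ and reduce to the probability that the ``rest'' $S,S'$ are equal or adjacent, then lower-bound that probability by $\Omega(1/\sigma)$. Your collision-probability argument via Chebyshev and Cauchy--Schwarz is a clean, self-contained way to get the second step with an explicit constant; the original derives the $1/\sigma$ bound by a slightly different anti-concentration estimate, but the two are interchangeable. As you note yourself, any $\Omega(1/\sigma)$ bound would already suffice for every use of this lemma in the present paper.
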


\subsection{Main Result and Proof}

We are now ready to state and prove our main result.

\begin{theorem}
  Let $k \le \frac 1 {20} \ln(n)-1$. Let $\mu \ge K \sqrt n \ln(n)$ for a sufficiently large constant $K$, but polynomially bounded in $n$. Then the cGA with frequency boundaries (Algorithm~\ref{alg:cga}) with hypothetical population size $\mu$ with probability $1 - o(1)$ finds the optimum of the $\jump_{nk}$ function in time $O(n \log n)$.
\end{theorem}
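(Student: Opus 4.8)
The plan is to follow the three-phase decomposition sketched in the proof overview, using the technical lemmas to control each phase, and then combining the failure probabilities by a union bound. Throughout I will write $\mu = K\sqrt n \ln n$ (the general polynomially-bounded case only makes the failure probabilities smaller), set $D_t = n - \|f_t\|_1$, and fix constants so that $K_1 \log n$ is the target of Phase~1, $K_2 k$ (or a suitable constant, since $k = O(\log n)$) the target of Phase~2, with $K_1 \gg K_2 \gg 1$. The first thing I would establish is a global ``frequencies stay $\Omega(1)$'' guarantee for the first two phases: apply Lemma~\ref{lconc} with $\eps = \tfrac16$ to each coordinate $i$, so that with probability $1 - 2n\exp(-\Omega(\mu^2/T))$ no frequency drops below $\tfrac13$ during the first $T = O(\mu\sqrt n)$ iterations. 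To invoke Lemma~\ref{lconc} I need to verify its hypotheses for a single frequency in the cGA on $\jump_{nk}$; the key point, and the place where the dependence issue discussed before Lemma~\ref{ldrift} bites, is that I must show $q_t \ge r_t$, i.e. each frequency is at least as likely to go up as down. On \onemax this is Lemma~\ref{lonemax2}; for $\jump_{nk}$ one argues that conditioning on sampling in the gap reverses the update, but a coordinate-wise bias computation (pessimistically bounding the gap contribution using Lemma~\ref{lsample}, exactly as in Lemma~\ref{ldrift}) still leaves $q_t \ge r_t$ as long as $D_t$ is not too small; since Phases~1--2 keep $D_t \gtrsim k$ and $k \le \tfrac1{20}\ln n$ is tiny, the gap penalty is $e^{-\Omega(D_t)}$, negligible against the $\Omega(1/\mu \cdot \sqrt{D_t}/n\text{-ish})$ drift. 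With $\mu \ge K\sqrt n\ln n$ and $T = O(\mu\sqrt n)$ we get $\mu^2/T = \Omega(\mu/\sqrt n) = \Omega(K\ln n)$, so the per-coordinate failure is $n^{-\Omega(K)}$ and the union bound over $n$ coordinates still gives $o(1)$, provided $K$ is large enough.

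Next, Phase~1. While $D_t \ge K_1\log n$, Lemma~\ref{lsample} (first bound, with $\delta$ a constant and $D^+ = D_t \ge K_1 \log n$) shows a single sample lands in the gap $G$ with probability $n^{-\Omega(K_1)}$; a union bound over the $O(\mu\sqrt n) = \poly(n)$ samples in this phase shows that with probability $1 - o(1)$ no gap point is ever sampled, so the run is \emph{verbatim} a run of the cGA on \onemax. Then Lemma~\ref{lonemax} (whose hypothesis $\mu \ge \log_2 n$ is satisfied) gives that $D_t \le K_1 \log n$ is reached within $\frac{10(2+\sqrt2)}{C}\mu\sqrt n = O(\mu\sqrt n)$ iterations with probability $1 - \exp(-\Omega(\mu)) = 1 - o(1)$, and on the way no frequency drops below $\tfrac13$. (I should take the constant $K$ in Lemma~\ref{lonemax} to match $K_1$; the statement of that lemma already couples them.) So Phase~1 ends in time $O(\mu\sqrt n)$ whp.

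Phase~2 runs from $D_t \le K_1\log n$ down to $D_t \le K_2 k$ (really down to some large constant, since $k$ may be $1$; but to unify, target $D_t \le C_0 k$ for suitable $C_0$, noting $2k \le D_t$ is needed for Lemma~\ref{ldrift}). Here I use the additive drift theorem. By Lemma~\ref{ldrift}, as long as $D_t \ge 2k$ and all frequencies are $\ge\tfrac13$ (guaranteed by the Phase~1--2 concentration bound above), $E[\mu D_t - \mu D_{t+1}] \ge \tfrac15 C\sqrt{D_t} - 6D_t e^{-D_t/8} - 2 \ge \delta_0$ for an absolute constant $\delta_0 > 0$, because once $D_t$ exceeds a suitable constant the $\sqrt{D_t}$ term dominates. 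Hence the drift in $D_t$ itself is $\ge \delta_0/\mu = \Omega(1/(\sqrt n \ln n))$. Starting from $D_t \le K_1\log n$, additive drift gives expected time $O(\mu K_1\log n) = O(\sqrt n (\log n)^2) = O(\mu\sqrt n)$ to reach the target; crucially this phase is short, so a plain Markov inequality upgrades this to a whp bound: with probability $1 - o(1)$ Phase~2 ends within, say, $O(\mu\sqrt n / \log n \cdot \log n) = O(\mu\sqrt n)$ iterations. (One subtlety: the drift theorem needs a lower bound on the drift that holds at every step in the phase, and $D_t$ can temporarily increase above $K_1\log n$; but the drift bound of Lemma~\ref{ldrift} holds for all $D_t \ge 2k$, and by the concentration of $|\|x^1\|_1-\|x^2\|_1|$ — Lemma~\ref{lsample} again — $D_t$ does not escape far above $K_1\log n$ whp, so the frequencies-stay-$\ge\tfrac13$ event remains intact. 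I would phrase this via a stopping time argument on the first time $D_t$ leaves $[C_0 k, 2K_1\log n]$.)

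Phase~3 is the delicate endgame: $D_t \le C_0 k = O(\log n)$, so gap samples are now frequent and $D_t$ has no reliable drift. Following the overview, I pessimistically bound $D_{t+1} \le D_t + \tfrac1\mu|\|x^1\|_1 - \|x^2\|_1| + (\text{boundary term})$. By Lemma~\ref{lsample}, each sample from a frequency vector with $D_t = O(\log n)$ has $d(x) = O(\log n)$ with probability $1 - n^{-\Omega(1)}$; more usefully, $E[|\|x^1\|_1-\|x^2\|_1|] = O(\sqrt{\log n} + \log n) = O(\log n)$ and the random variable has exponential tails, and the Lemma~\ref{lboundary} boundary term is $\tfrac1\mu\Bin(n,2/n)$ with expectation $2/\mu$. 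So over $m$ iterations $D_t$ increases by at most $O(m\log n/\mu)$ in expectation plus fluctuations; choosing $m = c'\mu$ for a small constant $c'$ keeps $D_t = O(k + c'\log n) = O(\log n)$ throughout with high probability (concentrating the sum of the per-step increments via a Chernoff/Azuma bound on the $m$ bounded-in-expectation increments). Since this phase lasts only $o(\mu)$ iterations, each frequency moves by at most $o(1)$, so frequencies stay $\ge \tfrac13 - o(1) \ge \tfrac14$ without needing Lemma~\ref{lconc} again. Now apply Lemma~\ref{lopt} with $c = \tfrac14$: at each such iteration $\Pr[x^1 = (1,\dots,1)] \ge (\tfrac14)^{D_t/(3/4)} = 4^{-(4/3)D_t} = 2^{-(8/3)D_t}$. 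With $D_t \le C_0 k$ and $k \le \tfrac1{20}\ln n - 1$, this is at least $2^{-(8/3)C_0 k} \ge n^{-(8/3)C_0/20} = n^{-\alpha}$ for a constant $\alpha < 1$ if $C_0 \le 7.5$ (i.e. $C_0$ fixed small enough, compatible with the $2k \le D_t$ requirement only mattering for the \emph{end} of Phase~2, not Phase~3). Then over $m = c'\mu = c'K\sqrt n\ln n \ge n^{\beta}$ independent trials with $\beta > \alpha$ (ensured by $K$ large, since $\mu \ge K\sqrt n\ln n$ and $\alpha < 1/2$ when $C_0$ is small), the probability of never sampling the optimum is $(1 - n^{-\alpha})^{c'\mu} \le \exp(-c' \mu n^{-\alpha}) = \exp(-\Omega(n^{1/2-\alpha}\ln n)) = o(1)$.

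Finally, assemble: with probability $1 - o(1)$ all of the following hold simultaneously (union bound over the $O(1)$ phases plus the $O(n)$-term frequency-concentration event): Phase~1 ends in time $O(\mu\sqrt n)$, Phase~2 in additional time $O(\mu\sqrt n)$, and within the next $c'\mu$ iterations of Phase~3 the optimum is sampled. The total is $O(\mu\sqrt n) = O(K\sqrt n\ln n \cdot \sqrt n) = O(n\log n)$ iterations, proving the theorem. The \textbf{main obstacle} I anticipate is Phase~3: not the optimum-sampling probability itself (Lemma~\ref{lopt} handles that cleanly), but rigorously controlling $D_t$ over the whole $\Theta(\mu)$-iteration window without any drift to lean on — one needs the increments $D_{t+1}-D_t$ to be bounded in expectation \emph{and} sufficiently light-tailed, uniformly over all reachable states with $D_t = O(\log n)$, and then a concentration inequality for a sum of dependent bounded increments, all while simultaneously maintaining the frequencies-$\ge\tfrac14$ invariant via the ``phase is short'' argument — the interlocking of these conditions is where the care is required. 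A secondary subtlety is getting all the constants ($K, K_1, C_0, \alpha, \beta$) to line up, in particular that the admissible jump size $k < \tfrac1{20}\ln n$ is exactly what makes $n^{-\alpha}$ with $\alpha < 1/2$ achievable so that $\mu = \Theta(\sqrt n\log n)$ trials suffice.
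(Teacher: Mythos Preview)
Your proposal is correct and follows the same three-phase approach as the paper, invoking the same lemmas in the same roles; the paper's Phase~3 is slightly cleaner in that it uses a coarse per-step bound $|D_{t+1}-D_t|\le 4(\ln n)^2/\mu$ (holding with probability $1-n^{-\omega(1)}$ via Lemma~\ref{lsample}) over only $L=\lfloor D''\mu/(4(\ln n)^2)\rfloor=O(\mu/\log n)$ iterations, which keeps $D_t\le 2D''$ deterministically on the good event and thereby sidesteps the sum-concentration issue you flag as the main obstacle. Two small slips to fix: your Phase~3 window is $c'\mu$, which is $\Theta(\mu)$ rather than $o(\mu)$, so frequencies drop by at most the constant $c'$ (still fine if $c'\le\tfrac1{12}$); and your exponent computation $2^{-(8/3)C_0 k}\ge n^{-(8/3)C_0/20}$ is missing a factor of $\ln 2$, so the admissible range for $C_0$ is tighter than the $C_0\le 7.5$ you state.
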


\begin{proof}
  To allow the reader to easily check that all implicit constants can be chosen in a way that they give the claimed results, we make these constants explicit in the following proof, but note that for most of them it just suffices to choose them sufficiently large. 

  Let $k \le C_k \ln(n)-1$, $C_k = \frac 1 {20}$.
  Let $\mu \ge c_{\mu} \sqrt n \ln(n)$ and $\mu \le n^{C_\mu}$ for a constant $c_\mu$ to be defined in a moment and, say, $C_\mu \ge 1$.
  Consider a run of the cGA on the objective function $\jump_{nk}$. 
  
  Let $\tilde T'$ be the first time that $D_t := n - \|f_t\|_1$ satisfies $D_t \le D' \coloneqq C_{D'} \ln n$, where $C_{D'} \ge 8 C_\mu + 12$ is a constant.  
  Let $\tilde T''$ be the first time that $D_t \le D'' \coloneqq \max\{2k+1, C_{D''}\}$, where $C_{D''}$ is a sufficiently large constant (independent also of all other constants). 
  
  Let $T' = \min\{\tilde T', \lfloor C_{T'} \mu \sqrt n \rfloor\}$ with $C_{T'} = \frac{10(2 + \sqrt 2)}{C}$, where $C$ is the constant from Lemma~\ref{ldroste}. Let $T'' = \min\{\tilde T'', \lfloor C_{T''} \mu \sqrt n \rfloor\}$ with $C_{T''} = C_{T'}+1$. Let now $c_\mu \ge 36 C_{T''}$.
  
  We first argue that with high probability we have no frequencies below~$\tfrac 13$ up to time $T''$. For this, consider some time $t$ such that $f_t \in [\frac 13,1]^n$ and $D_t \ge D''$. Consider a fixed bit $i\in [1..n]$ such that $f_{it} \neq 1-\frac 1n$. If we were optimizing the \onemax function, then by Lemma~\ref{lonemax2},
  \begin{align*}
  \Pr[&f_{i,t+1} = f_{it} + \tfrac 1\mu] - \Pr[f_{i,t+1} = f_{it} - \tfrac 1\mu]\\
  & = \mu E[f_{i,t+1} - f_{it}]\\
  & \ge \frac{2}{11} f_{it}(1-f_{it}) \left(\sum_{j\neq i} f_{jt}(1-f_{jt})\right)^{-1/2}\\
  & \ge \frac{2}{11} f_{it}(1-f_{it}) \left(D_t\right)^{-1/2}.
  \end{align*}
  
  Regardless of whether we optimize \onemax or $\jump_{nk}$, the events $f_{i,t+1} = f_{it} + \tfrac 1\mu$ and $f_{i,t+1} = f_{it} - \tfrac 1\mu$ can only occur when the two search points sampled in this iteration satisfy $x^1_i \neq x^2_i$. Further, the definition of $f_{i,t+1}$ differs from the \onemax case at most when at least one of $x^1$ and $x^2$ lie in the gap $G_{nk}$. Hence the following coarse correction of the above estimate is valid for the optimization of $\jump_{nk}$.
  \begin{align*}
  &\Pr[f_{i,t+1} = f_{it} + \tfrac 1\mu] - \Pr[f_{i,t+1} = f_{it} - \tfrac 1\mu]\\
  & \ge \tfrac{2}{11} f_{it}(1-f_{it}) \left(D_t\right)^{-1/2} - \Pr[(x^1_i \neq x^2_i) \wedge (\{x^1,x^2\} \cap G_{nk} \neq \emptyset)].
  \end{align*}
  We now estimate this correction term, first by noting that $\Pr[(x^1_i \neq x^2_i) \wedge (\{x^1,x^2\} \cap G_{nk} \neq \emptyset)] = \Pr[x^1_i \neq x^2_i] \cdot \Pr[\{x^1,x^2\} \cap G_{nk} \neq \emptyset \mid x^1_i \neq x^2_i]$, then by using the union bound estimate $\Pr[\{x^1,x^2\} \cap G_{nk} \neq \emptyset \mid x^1_i \neq x^2_i] \le 2 \Pr[x^1 \in G_{nk} \mid x^1_i \neq x^2_i]$.
  Conditional on $x^1_i \neq x^2_i$, the bit string $x^1$ is sampled from $\Sample(f_t)$, however, conditional on the $i$-th bit being zero or one. In either case, to have $x^1 \in G_{nk}$, we need that $\tilde D = \sum_{j \neq i} (1-x^1_j)$ is at most $k \le \frac 12 (D_t-1)$, where we recall that $D_t \ge D'' \ge 2k+1$. Since $E[\tilde D] = D_t - (1-f_{it}) \ge D_t-1$, by Lemma~\ref{lsample} with $\delta = \frac 12$ this event happens with probability at most $\exp(-\tfrac 18 (D_t-1))$. Together with $\Pr[x^1_i \neq x^2_i] = 2 f_{it}(1-f_{it})$, we obtain
  \begin{align*}
  \Pr[&f_{i,t+1} = f_{it} + \tfrac 1\mu] - \Pr[f_{i,t+1} = f_{it} - \tfrac 1\mu]\\
  & \ge \tfrac{2}{11} f_{it}(1-f_{it}) \left(D_t\right)^{-1/2} - 2 f_{it}(1-f_{it}) \exp(-\tfrac 18 (D_t-1)),
  \end{align*}
  which is non-negative since $D_t \ge D'' \ge C_{D''}$, which was chosen sufficiently large. 
  
  Consequently, the process $(f_{it})_t$ satisfies the assumptions of Lemma~\ref{lconc} up to time $T''$. If $T'' < C_{T''} \mu \sqrt n$, we artificially extend the process (for the following argument only) by setting $f_{it} = f_{i T''}$ for all $t \in [T''+1..C_{T''} \mu \sqrt n]$. By Lemma~\ref{lconc} we thus obtain that up to time $T = \lfloor C_{T''} \mu \sqrt n \rfloor$, the \mbox{$i$-th} frequency is always at least $\frac 13$ with probability $1 - 2\exp(-\frac{\mu^2}{18T}) \ge 1 - 2\exp(-\frac{\mu}{18 C_{T''} \sqrt n}) \ge 1 - 2\exp(-\frac{c_{\mu}}{18 C_{T''}} \ln n)$. With a union bound over the $n$ frequencies, we have $f_t \in [\frac 13,1]^n$ in this time interval with probability at least $1 - 2n\exp(-\frac{c_{\mu}}{18 C_{T''}} \ln n) = 1 - O(1/n)$ by choice of $c_\mu$ and $C_{T''}$.

Since $D' = C_{D'} \ln n$ with $C_{D'} \ge 12$ and $k \le C_k \ln n \le \frac{C_{D'}}2 \ln n$, by Lemma~\ref{lsample} and a union bound the probability that within the first $T' \le C_{T'}\mu\sqrt n$ iterations a search point in the gap region is sampled, is at most $2 C_{T'}\mu\sqrt n \exp(-\frac {C_{D'}}8 \ln n) \le 2 C_{T'} n^{C_\mu + 0.5 - C_{D'}/8} = O(1/n)$. Consequently, by Lemma~\ref{lonemax}, after at most $C_{T'}\mu\sqrt n$ iterations with probability $1 - O(1/n)$ we have $D_t \le D'$. 

We now estimate the additional time it takes to reach $D_t \le D''$. Let $t_0$ be the first time such that $D_t \le D'$. By Lemma~\ref{ldrift} and using our assumption that $C_{D''}$ is a large absolute constant, we have $E[D_t - D_{t+1} \mid D_t] \ge \frac{1}{\mu}$ when $D_t \ge D'' \ge C_{D''}$. We define a random process $\tilde D_t$ as follows. Let $t \ge t_0$. If $D_s < D''$ for some $s \in [t_0..t]$, then $\tilde D_{t-t_0} = 0$. Otherwise, $\tilde D_{t - t_0} = D_t$. By the above observation, we have $E[\tilde D_t - \tilde D_{t+1} \mid \tilde D_t > 0] \ge \frac{1}{\mu}$. By the additive drift theorem~\cite{HeY01}, also to be found in the recent survey~\cite{Lengler17}, $T\coloneqq \min\{t \mid \tilde D_t = 0\}$ satisfies $E[T] \le \frac{D'}{1/\mu} = O(\mu \log n)$. By Markov's inequality, we have $T = O(\mu n^{0.4} \log n)$ with probability $1 - n^{-0.4}$. 

Let now $t_0$ be such that $D_{t_0} \le D''$ and all frequencies are at least $\frac 13$. We first argue that if $D_t \le \ln(n)^2$, then $\Pr[D_{t+1} \ge D_t + \frac 4\mu \ln(n)^2] \le n^{\omega(1)}$. By Lemma~\ref{lsample}, we have $\Pr[d(x^j) \ge 2\ln(n)^2] \le \exp(-\ln(n)^2/3) = n^{\omega(1)}$ for $j = 1,2$. Consequently, with probability $1 - n^{\omega(1)}$, we have both $\|x^1\|_1 \ge n - 2\ln(n)^2$ and $\|x^2\|_1 \ge n - 2\ln(n)^2$. In this case, the Hamming distance between $x^1$ and $x^2$ satisfies $H(x^1,x^2) \le 4 \ln(n)^2$, which implies that $|D_t - D_{t+1}| \le \|f_t - f_{t+1}\|_1 \le \frac 4 \mu \ln(n)^2$ and thus $D_{t+1} \le D_t + \frac 4\mu \ln(n)^2$. By a union bound, with probability $1 - n^{\omega(1)}$, this happens in all iterations $t_0, \dots, t_0 + \lfloor\frac{D''}{(4/\mu) \ln(n)^2}\rfloor -1$ and consequently, throughout these $L = \lfloor \frac{D''}{(4/\mu) \ln(n)^2} \rfloor$ iterations we have $D_t \le 2D''$. Note that $L = O(\mu / \log(n))$, hence throughout this period we also have $f_{it} \ge \tfrac 13 - \tfrac 1 \mu L \ge 0.32$ (assuming $n$ to be sufficiently large). By Lemma~\ref{lopt}, the probability that a fixed search point sampled in this period is the optimum, is at least $0.32^{2D'' / 0.68} \ge 0.32^{4 C_k \ln(n) / 0.68} = \exp(4 C_k \ln(n) \ln(0.32) / 0.68) \ge n^{-6.71 C_k} \ge n^{-0.34}$ by choice of $C_k$. Hence the probability that the optimum is not sampled in this period is at most $(1 - n^{-0.34})^{2L} \le (1 - n^{-0.34})^{\mu / \ln(n)^2} \le \exp(- n^{-0.34} \cdot \mu / \ln(n)^2) \le \exp(-\Omega(n^{0.16}/ \log(n)))$.
\end{proof}

Let us remark that we did not try to optimize the implicit constants, nor did we try to find the largest constant $C_k$ such that the $O(n \log n)$ runtime guarantee holds for all $k \le C_k \ln(n) - 1$. We further note that all  but one argument in the above proof, by choosing the constants right, would give a success probability of $1 - n^{-c}$, where $c$ can be any constant. This is not true for the Markov bound argument in the analysis of the time to reach a $D_t$ value of at most $D''$. Without further details, we note that also for this phase an arbitrary inverse-polynomial failure probability could be obtained with stronger methods. 

Finally, we note that by taking $k=1$, our result also applies to the \onemax function.

\section{Conclusion}

This is, to the best of our knowledge, only the second mathematical analysis of an EDA on a multi-modal optimization problem. Our main result shows that the cGA can optimize jump functions with logarithmic jump sizes in asymptotically the same efficiency as the simple \onemax function. It thus does not suffer from the fitness valleys present in these objective function.

The obvious question arising from this work is to what extent such or similar results hold for other EDAs. Natural candidates could be the UMDA, for which several rigorous runtime results exist, see~\cite{KrejcaW18}, and the significance-based cGA~\cite{DoerrK18}, which might profit from using only the three frequencies $\frac 1n$, $\frac 12$, and $1 - \frac 1n$. Equally interesting would be results for other multi-modal optimization problems. On the more speculative side, given that the black-box complexity of jump functions is low even for large jump sizes~\cite{BuzdalovDK16}, one could also try to challenge the upper bound $\exp(O(k))$ given in~\cite{HasenohrlS18} for larger values of~$k$, either by proving that the cGA also performs better here or by exploring if suitable modifications of the cGA can lead to a better performance.

}


\begin{thebibliography}{DLMN17}

\bibitem[BDK16]{BuzdalovDK16}
Maxim Buzdalov, Benjamin Doerr, and Mikhail Kever.
\newblock The unrestricted black-box complexity of jump functions.
\newblock {\em Evolutionary Computation}, 24:719--744, 2016.

\bibitem[COY17]{CorusOY17}
Dogan Corus, Pietro~Simone Oliveto, and Donya Yazdani.
\newblock On the runtime analysis of the {O}pt-{IA} artificial immune system.
\newblock In {\em Genetic and Evolutionary Computation Conference, {GECCO}
  2017}, pages 83--90. {ACM}, 2017.

\bibitem[COY18]{CorusOY18fast}
Dogan Corus, Pietro~Simone Oliveto, and Donya Yazdani.
\newblock Fast artificial immune systems.
\newblock In {\em Parallel Problem Solving from Nature, {PPSN} 2018}, pages
  67--78. Springer, 2018.

\bibitem[DFK{\etalchar{+}}16]{DangFKKLOSS16}
Duc{-}Cuong Dang, Tobias Friedrich, Timo K{\"{o}}tzing, Martin~S. Krejca,
  Per~Kristian Lehre, Pietro~Simone Oliveto, Dirk Sudholt, and Andrew~M.
  Sutton.
\newblock Escaping local optima with diversity mechanisms and crossover.
\newblock In {\em Genetic and Evolutionary Computation Conference, GECCO 2016},
  pages 645--652. {ACM}, 2016.

\bibitem[DFK{\etalchar{+}}18]{DangFKKLOSS18}
Duc{-}Cuong Dang, Tobias Friedrich, Timo K{\"{o}}tzing, Martin~S. Krejca,
  Per~Kristian Lehre, Pietro~Simone Oliveto, Dirk Sudholt, and Andrew~M.
  Sutton.
\newblock Escaping local optima using crossover with emergent diversity.
\newblock {\em {IEEE} Transactions on Evolutionary Computation}, 22:484--497,
  2018.

\bibitem[DHK11]{DoerrHK11}
Benjamin Doerr, Edda Happ, and Christian Klein.
\newblock Tight analysis of the (1+1)-{EA} for the single source shortest path
  problem.
\newblock {\em Evolutionary Computation}, 19:673--691, 2011.

\bibitem[DJ10]{DoerrJ10}
Benjamin Doerr and Daniel Johannsen.
\newblock Edge-based representation beats vertex-based representation in
  shortest path problems.
\newblock In {\em Genetic and Evolutionary Computation Conference, GECCO 2010},
  pages 759--766. ACM, 2010.

\bibitem[DJW02]{DrosteJW02}
Stefan Droste, Thomas Jansen, and Ingo Wegener.
\newblock On the analysis of the (1+1) evolutionary algorithm.
\newblock {\em Theoretical Computer Science}, 276:51--81, 2002.

\bibitem[DK18]{DoerrK18}
Benjamin Doerr and Martin~S. Krejca.
\newblock Significance-based estimation-of-distribution algorithms.
\newblock In {\em Genetic and Evolutionary Computation Conference, GECCO 2018},
  pages 1483--1490. ACM, 2018.

\bibitem[DLMN17]{DoerrLMN17}
Benjamin Doerr, Huu~Phuoc Le, R\'egis Makhmara, and Ta~Duy Nguyen.
\newblock Fast genetic algorithms.
\newblock In {\em Genetic and Evolutionary Computation Conference, GECCO 2017},
  pages 777--784. {ACM}, 2017.

\bibitem[Doe18]{Doerr18bookchapter}
Benjamin Doerr.
\newblock Probabilistic tools for the analysis of randomized optimization
  heuristics.
\newblock {\em CoRR}, abs/1801.06733, 2018.

\bibitem[Doe19]{Doerr19}
Benjamin Doerr.
\newblock A tight runtime analysis for the {cGA} on jump functions---{EDAs} can
  cross fitness valleys at no extra cost.
\newblock In {\em Genetic and Evolutionary Computation Conference, GECCO 2019}.
  ACM, 2019.
\newblock To appear.

\bibitem[Dro06]{Droste06}
Stefan Droste.
\newblock A rigorous analysis of the compact genetic algorithm for linear
  functions.
\newblock {\em Natural Computing}, 5:257--283, 2006.

\bibitem[HLG99]{HarikLG99}
Georges~R. Harik, Fernando~G. Lobo, and David~E. Goldberg.
\newblock The compact genetic algorithm.
\newblock {\em {IEEE} Transactions on Evolutionary Computation}, 3:287--297,
  1999.

\bibitem[Hoe63]{Hoeffding63}
Wassily Hoeffding.
\newblock Probability inequalities for sums of bounded random variables.
\newblock {\em Journal of~the American Statistical Association}, 58:13--30,
  1963.

\bibitem[HS18]{HasenohrlS18}
V{\'{a}}clav Hasen{\"{o}}hrl and Andrew~M. Sutton.
\newblock On the runtime dynamics of the compact genetic algorithm on jump
  functions.
\newblock In {\em Genetic and Evolutionary Computation Conference, {GECCO}
  2018}, pages 967--974. {ACM}, 2018.

\bibitem[HY01]{HeY01}
Jun He and Xin Yao.
\newblock Drift analysis and average time complexity of evolutionary
  algorithms.
\newblock {\em Artificial Intelligence}, 127:51--81, 2001.

\bibitem[JW02]{JansenW02}
Thomas Jansen and Ingo Wegener.
\newblock The analysis of evolutionary algorithms---a proof that crossover
  really can help.
\newblock {\em Algorithmica}, 34:47--66, 2002.

\bibitem[KW18]{KrejcaW18}
Martin~S. Krejca and Carsten Witt.
\newblock Theory of estimation-of-distribution algorithms.
\newblock {\em CoRR}, abs/1806.05392, 2018.

\bibitem[Len17]{Lengler17}
Johannes Lengler.
\newblock Drift analysis.
\newblock {\em CoRR}, abs/1712.00964, 2017.

\bibitem[LSW18]{LenglerSW18}
Johannes Lengler, Dirk Sudholt, and Carsten Witt.
\newblock Medium step sizes are harmful for the compact genetic algorithm.
\newblock In {\em Genetic and Evolutionary Computation Conference, {GECCO}
  2018}, pages 1499--1506. {ACM}, 2018.

\bibitem[SW16]{SudholtW16}
Dirk Sudholt and Carsten Witt.
\newblock Update strength in {EDAs} and {ACO}: How to avoid genetic drift.
\newblock In {\em Genetic and Evolutionary Computation Conference, GECCO 2016},
  pages 61--68. ACM, 2016.

\bibitem[Weg05]{Wegener05}
Ingo Wegener.
\newblock Simulated annealing beats {M}etropolis in combinatorial optimization.
\newblock In {\em Automata, Languages and Programming, {ICALP} 2005}, pages
  589--601. Springer, 2005.

\bibitem[ZYD18]{ZhengYD18}
Weijie Zheng, Guangwen Yang, and Benjamin Doerr.
\newblock Working principles of binary differential evolution.
\newblock In {\em Genetic and Evolutionary Computation Conference, GECCO 2018},
  pages 1103--1110. ACM, 2018.

\end{thebibliography}

\newcommand{\etalchar}[1]{$^{#1}$}

\end{document}